\documentclass[final,12pt]{clear2026}
\usepackage{macros/preamble}
\usepackage{macros/influence-diagrams}
\usepackage{times}

\begin{document}

\title[Causal Foundations of Collective Agency]{Causal Foundations of Collective Agency}

\clearauthor{%
 \Name{Frederik Hytting Jørgensen} \Email{frederik.hytting@math.ku.dk}\\
 \addr Copenhagen Causality Lab and Pioneer Centre for AI,\\
 University of Copenhagen, Denmark
 \AND
 \Name{Sebastian Weichwald} \Email{sweichwald@math.ku.dk}\\
 \addr Copenhagen Causality Lab and Pioneer Centre for AI,\\
 University of Copenhagen, Denmark
 \AND
 \Name{Lewis Hammond} \Email{lewis.hammond@cooperativeai.org}\\
 \addr Cooperative AI Foundation, United Kingdom
}

\maketitle

\begin{abstract}
    A key challenge for the safety of advanced AI systems is the possibility that multiple 
    simpler agents might inadvertently form a collective %
    agent with capabilities and goals distinct from those of any individual.
    More generally, determining when a group of agents %
     can be viewed as a unified collective agent is a foundational question 
     in the study of interactions and incentives in both biological and artificial systems.
    We adopt a behavioral perspective in answering this question, ascribing collective agency to a group when viewing the group's joint actions as rational and goal-directed successfully predicts its behavior.
    We formalize this perspective on collective agency 
    using causal games 
    \citep{hammond2023reasoning} -- which are causal models of strategic, multi-agent interactions -- 
    and causal abstraction \citep{rubenstein2017causal,beckers2019abstracting} -- which formalizes when 
    a simple, high-level model faithfully captures a more complex, low-level model.
    We use this framework to solve a puzzle regarding multi-agent incentives in actor-critic models 
    and to make quantitative assessments of the degree of collective agency exhibited by different voting mechanisms.
    Our framework aims to provide a foundation for theoretical and empirical work to understand, predict, 
    and control emergent collective
    agents in multi-agent 
    AI systems.
\end{abstract}

\begin{keywords}
    Collective Agency, Causal Abstraction, Causal Incentives, AI Safety
\end{keywords}

\section{Introduction}
\label{sec:intro}

When does it make sense to view a group of individual agents as a unified collective agent?
While this question may seem rather subjective at first, there is an intuitive sense in which we might ascribe collective agency to, for example, a well-organized, efficient, and synchronized team, as opposed to a random assortment of individuals pursuing independent courses of action.
This is despite the fact that such a team may not have a leader or hierarchical structure (such as a jazz ensemble improvising, or a school of fish avoiding a predator), and may not even have the same objectives as one another (such as a market of buyers and sellers, or pollinating insects and flowering plants).

Indeed, humans often use notions of collective agency to understand the world \citep{Roth2017,Schweikard2021}.
For example, it is not uncommon for people to use language such as ``country A wants X, which conflicts with the interests of country B'' or ``company C is pursuing a new strategy Y''. 
While statements like these are usually supposed to be approximations, 
they can still be helpful for understanding the world and making predictions about the future.
Some have even gone so far as to argue that individual minds are composed of agentic subsystems \citep{Nagel1971,Minsky1988},
in which case there is no privileged level of abstraction on which to locate agency. 
How can we make sense of agency arising at various levels of abstraction?

Far from being purely philosophical, this question has important implications for understanding and predicting collective behavior in multi-agent systems, and hence for ensuring the safety of networks of artificial agents.
A key safety challenge here is the possibility that multiple simpler AI systems might inadvertently form a `super-agent' with capabilities and goals distinct from any individual in the group \citep{drexler2019reframing,hammond2025multi}.
Compared to AI tools, the ability of artificial \emph{agents} to make plans and take actions in pursuit of complex goals makes them not only more useful, but also more harmful if those goals are misaligned or their pursuit produces negative side effects \citep{chan2023harms}.
For example, competitive pressures may lead individually rational AI agents to rapidly exhaust collective resources \citep{Piatti2024}, or a group of agents might combine their harmless individual capabilities to override safeguards and execute a cyberattack \citep{Jones2024}.

\subsection{Example: Actor-Critic Agents}
\label{ex: actor-critic introduction}

As a simple example of a collective agent, consider the well-known family of 
reinforcement learning (RL) agents known as \textit{actor-critic} (AC) algorithms \citep{Konda2000}.
An AC agent comprises a critic that attempts to quantify how well the actor is performing relative to an objective, and an actor that attempts to improve its strategy according to the critic's judgment.
A single-step decision problem featuring an AC algorithm is shown as a causal game \citep{hammond2023reasoning}
in \Cref{fig:AC_algo_intro}. %
\begin{figure}[h]
    \centering
    \subfigure[]{%
        \begin{influence-diagram}
            \node (A) [decision, player1] {$A$};
            \node (S) [right = of A] {$S$};
            \node (R) [right = of S] {$R$};
            \node (Q) [below = of A, decision, player2] {$Q$};
            \node (Y) [right = of Q, utility, player1] {$Y$};
            \node (L) [right = of Y, utility, player2] {$W$};
            \edge {A} {S,Y};
            \edge {S} {R};
            \edge {Q} {Y};
            \edge {Y,R} {L};
        \end{influence-diagram}
    }
    \hspace{4em}
    \subfigure[]{%
        \begin{influence-diagram}
            \node (A) [decision, player3] {$A$};
            \node (S) [right = of A] {$S$};
            \node (R) [right = of S, utility, player3] {$R$};
            \edge {A} {S};
            \edge {S} {R};
        \end{influence-diagram}
    }
    \caption[]{(a) A causal game representing an AC algorithm.
        In a causal game, square nodes represent decisions, circular nodes represent 
        chance variables, diamond nodes represent utilities, and edges  represent causal dependencies.
        In this game, the actor chooses an action $a$, which leads 
        to a new state $s$ and in turn the (true) reward $r$.%
        \footnotemark{}
        The critic, on the other hand, chooses a Q function
        $q$ that takes as input the action $a$ to produce a \textit{predicted} reward $y = q(a)$.
        The actor's utility is given by $y$ and the
         critic's utility is given by the loss $w = \ell(y, r)$ where $\ell$ is a loss function.
		(b) A causal influence diagram (a single-agent causal game) 
        forming a valid abstraction of the causal game in panel 
        (a) of \Cref{fig:AC_algo_intro}. In this game, the (collective) 
        AC agent effectively acts so as to maximize the reward directly.
		Both figures are adapted from \citet{kenton2023discovering}.}
    \label{fig:AC_algo_intro}
\end{figure}

\footnotetext{Note that in this simple, single-step AC problem, we leave the (fixed) starting state as implicit,
such that $Y$ only has $A$ as a parent. While it is common for the reward function to
 be a function of the current state, action, 
 and the new state, including sufficient information 
 within the new state leads to an equivalent formulation 
 where the reward is simply a function of the variable $S$. 
 }

Even though neither the actor nor the critic is maximizing for the reward, 
the overall system still intuitively seems to pursue this goal.
In this work, we argue that this is precisely because there is a valid \emph{abstraction} that  faithfully represents the low-level causal game \citep{rubenstein2017causal,beckers2019abstracting,beckers2020approximate}, and which corresponds to the overall agent (shown in panel (b) of \Cref{fig:AC_algo_intro}).
While \citet{kenton2023discovering} also suggested that panel (b) of \Cref{fig:AC_algo_intro} was a possible alternative model of the true AC dynamics in panel (a), they made no argument as to why.
We solve this problem, providing a formal account of agency at different levels of abstraction.

\subsection{Related Work}
\label{sec:related_work}

The question of when multiple individuals constitute a unified collective agent has long occupied philosophers, with foundational work establishing the importance of collective intentionality and shared intentions \citep{Searle1990,Tuomela2006,Ludwig2007,List2011,Pacherie2013,Bratman2014}.
While most of these prior works focus on a rich notion of collective agency, such as requiring the mental modeling of collaborators, we take a behavioral or `black-box' approach,
reducing the question of whether something is a collective agent to whether ascribing collective agency is predictive of the group's behavior.
Though these works provide conceptual foundations, they generally lack the formal machinery needed to rigorously identify collective agents.
Recent work has made progress on defining \emph{individual}
 agents using causal and decision-theoretic tools
  \citep{Orseau2018,kenton2023discovering,macdermott2024measuring,Xu2024,abel2025agency,Everitt2025,rajcic2025goal},
   but none formally consider \emph{collective} agents.

For multi-agent settings, \citet{hammond2023reasoning} introduce causal games -- a framework generalizing multi-agent influence diagrams \citep{Koller2003} to higher levels of Pearl's causal hierarchy \citep{Pearl2009a}.
Other game-theoretic approaches provide models of coalition formation \citep{Ray2007,Elkind2016} but do not address whether such coalitions constitute unified agents.
The concept of causal abstraction \citep{rubenstein2017causal,beckers2019abstracting,beckers2020approximate} formalizes when a high-level causal model is a valid abstraction of a detailed low-level model, preserving causal relationships when variables are aggregated.
However, existing causal abstraction work focuses on 
causal models without agents, requiring extension to multi-agent settings with strategic 
interactions.
A discussion of further related work can be found in \Cref{app:related_work}.

\subsection{Contributions}

After reviewing the necessary background in \Cref{sec:background}, we first use causal games \citep{hammond2023reasoning} to define what we mean by an individual decision-making agent, according to a given notion of rationality (such as playing a best response), in \Cref{sec:agency}.
Then, in \Cref{sec:collective_agency} we leverage the concept of causal abstraction to provide a principled account of when a set of `low-level' agents can be usefully modeled as a single `high-level' collective agent.
Using these formal tools, we prove a proposition that rules out the emergence of non-trivial agency at higher levels of abstraction and analyze
the emergence of collective agency in the actor-critic example (\Cref{ex: actor-critic introduction}).
We further illustrate this framework in \Cref{sec:surrogate_modeling} by empirically analyzing different voting mechanisms and the extent to which they transform the different voters into a collective agent.
\Cref{sec:conclusion} concludes with a brief summary and an overview of directions for future work.
\Cref{app:applications} further discusses potential applications of our framework
 to multi-agent reinforcement learning and large language model agents.

\section{Background}\label{sec:background}

Our notation and setup draw inspiration from \citet{geiger2025causal}. We begin with the 
notion of a signature, which describes a set of variables and the values they may take.

\begin{definition}
    A \textbf{signature} is a tuple $\Sigma=(\rX,(\dom(X_i))_{i\in [d]})$, where $\rX$ is a 
    tuple of variables $(X_1,\dots, X_d)$ and $(\dom(X_i))_{i\in [d]}$ is a tuple of ranges for 
    each variable.\footnote{We sometimes consider $\rX$ a set rather than a tuple.} 
\end{definition}

For any subset $\rY\subseteq \rX$, we define $\dom(\rY)=\bigtimes_{Y\in \rY}\dom(Y)$.
Formally, we assume that the variables have disjoint ranges.\footnote{The assumption of 
disjoint ranges can be made without loss of generality: we 
    subscript every value with the variable name to ensure uniqueness, such that, for example, 
    $x_{X_i}\neq x_{X_j}$ for $x\in \mathbb{R}$ and $i\neq j$. If it is clear from the context 
    which range a value belongs to, we may omit the subscript.}
Therefore, we may consider the elements of $\dom(\rY)$ as sets rather than tuples. This technical assumption allows us to identify settings of
variables $\rY\subseteq \rX$ with sets $\ry\in \dom(\rY)$, which is useful for defining projections 
 (\definitionref{def: projection}) and
  mechanized abstractions (\definitionref{def: collective agency}).

\begin{definition} 
    \label{def: projection} 
Given two sets of variables $\rY,\rZ \subseteq \rX$, and setting $\ry\in \dom(\rY)$, we define 
the \textbf{projection}
    $\proj_{\rZ}(\ry)=\ry \cap \left(\bigcup_{Z\in \rZ} \dom(Z)\right)$.
\end{definition}

For example, consider a signature given by the real-valued variables $\rX=\{X_1,X_2\}$,
i.e., where each has range $\{x_{X_i}\mid x\in \mathbb{R}\}$ for $i \in {1,2}$.
Then $\proj_{\{X_1\}}(\{4_{X_1},5_{X_2}\})=\{4_{X_1}\}$ and $\proj_{\{X_1\}}(\{5_{X_2}\})=\emptyset$.

\begin{definition} \label{def: det SCM}
    Given a signature $\Sigma$, a \textbf{deterministic (cyclic) structural causal model} 
    $\widetilde{\mathcal{M}}$ over $\Sigma$ is a set of functions $\{\mathcal{F}_X\}_{X\in \rX}
    $, where $\mathcal{F}_X: \dom(\rX\backslash\{X\})\to \dom(X)$. By $\Sol(\widetilde{\mathcal{M}})$ we denote the 
    elements $\rx\in \dom(\rX)$ -- i.e. the \textbf{solutions} -- that satisfy $\mathcal{F}_X(\rx\backslash\proj_X(\rx))=\proj_X(\rx)$ for all $X\in \rX$.
\end{definition}

Notice that while deterministic structural causal models may be cyclic, we will assume that probabilistic causal models are acyclic (see \definitionref{def: SCM}).

\begin{definition}
    A \textbf{hard intervention} in a deterministic SCM is a setting $\ry$ of some subset of variables 
    $\rY\subseteq \rX$. By $\Sol(\widetilde{\mathcal{M}};\ry)$ -- the \textbf{solutions} in $\widetilde{\mathcal{M}}$, given intervention $\ry$ --
     we denote the set of values $\rx\in \dom(\rX)$ 
    such that $\mathcal{F}_X(\rx\backslash\proj_X(\rx))=\proj_X(\rx)$ 
    for $X\notin \rY$ and 
    $\proj_X(\rx)=\proj_X(\ry)$ for $X\in \rY$. 
\end{definition}

A hard intervention corresponds to replacing the functions $\mathcal{F}_X$ with $\rz \mapsto 
\proj_X(\ry)$ for $X\in \rY$, and the solutions $\Sol(\widetilde{\mathcal{M}};\ry)$ correspond 
to the settings $\rx$ of variables $\rX$ that are solutions to this new system of equations.

\begin{definition}
    \label{def: SCM}
    Given a signature $\Sigma$, 
    a \textbf{structural causal model} (SCM) $\mathcal{M}$ over 
    $\Sigma$ is a tuple
    $$\left((\rV,\bm{\mathcal{E}}), \mathcal{G},\{\mathcal{F}_{V}\}_{V\in \rV},\mathbb{P}(\bm{\mathcal{E}})\right),$$
    where \begin{enumerate}
        \p $(\rV,\bm{\mathcal{E}})$ is a partition of the variables in $\Sigma$ into noise variables $\bm{\mathcal{E}}$ and object variables $\rV$. We assume that each object variable has a unique noise variable associated with it, that is, $\bm{\mathcal{E}}=\{\mathcal{E}_V\}_{V\in \rV}$.  
        \p $\mathcal{G}$ is a directed acyclic graph over the object variables $\rV$. 
        \p $\{\mathcal{F}_V\}_{V\in \rV}$ is a set of structural assignments $\mathcal{F}_V: \dom(\PA_V^{\mathcal{G}}\cup \{\mathcal{E}_V\})\to \dom(V)$, where $\PA_V^{\mathcal{G}}$ denotes the parents of $V$ in $\mathcal{G}$.
        \p $\mathbb{P}(\bm{\mathcal{E}})$ is a distribution on $\dom(\bm{\mathcal{E}})$. We assume that the variables $\{\mathcal{E}_V\}_{V\in \rV}$ are jointly independent.
    \end{enumerate}
    An SCM $\mathcal{M}$ induces a unique distribution $\mathbb{P}_\mathcal{M}(\rV)$ over the 
    object variables $\rV$ such that $V=\mathcal{F}_V(\PA_V,\mathcal{E}_V)$, in distribution, 
    for every $V\in \rV$. Since the noise variables are jointly independent, the induced 
    distribution $\mathbb{P}_\mathcal{M}(\rV)$ is \textit{Markovian} with respect to $\mathcal
    {G}$ \citep[][Prop. 6.31]{Peters2017}.
\end{definition}

In order to consider causal games in which the mechanisms (i.e. the  structural functions and noise variables) governing an object variable can change if an agent selects a different strategy, it will be useful to define parameterized SCMs, which is a class of SCMs with functions that are indexed by parameters.\footnote{This restriction to parameterized functions -- again departing from \citep{hammond2023reasoning} -- is more natural for AI agents whose strategies are represented by parametric models such as a neural network.}

\begin{definition}
    Given a signature $\Sigma$, a \textbf{parameterized SCM} $\mathcal{M}^{\bm{\Theta}}$ over  
    $\Sigma$ is a tuple 
    $$\left((\rV,\bm{\mathcal{E}}), \mathcal{G},\{\Theta_V\}_{V\in \rV},
    \{\mathcal{F}^\theta_V\}_{V\in \rV,\theta\in \Theta_{V}},\mathbb{P}(\bm{\mathcal{E}})\right)$$
    where $(\rV,\bm{\mathcal{E}}), \mathcal{G}$, and $\mathbb{P}(\bm{\mathcal{E}})$ are as 
    for an SCM (\definitionref{def: SCM}), and
    \begin{enumerate}
        \p $\{\Theta_V\}_{V\in \rV}$ is a set of parameter spaces, one for each object  variable. 
        \p $\{\mathcal{F}^\theta_V\}_{V\in \rV,\theta\in \Theta_{V}}$ is a set of structural 
        assignments $\mathcal{F}^\theta_V: \dom(\PA_V^\mathcal{G}\cup \{\mathcal{E}_V\})\to \dom
        (V)$, which are indexed by variables $V$ and parameters $\theta_V \in \Theta_V$.
    \end{enumerate}
    For a given setting $\bm{\theta}=\{\theta_V\}_{V\in \rV}$ of the parameters $\theta_V\in 
    \Theta_V$, a parameterized SCM induces an SCM $\mathcal{M}^{\bm{\theta}}$.
\end{definition}

We are now ready to provide a formal definition of \emph{mechanized} structural causal models 
\citep{hammond2023reasoning}, in which the parameters $\theta_V$ 
of each structural function (representing the causal mechanisms in the model) are explicitly 
represented and governed by a separate deterministic cyclic SCM.\footnote{\citet{dawid2002influence} 
provides an earlier example of mechanized SCMs, 
though he does not consider the mechanism nodes themselves as being governed by an SCM, 
rather each mechanism node is parentless.} 
We depart slightly from \citet{hammond2023reasoning} by 
having the mechanisms being governed by functions rather than relations. 

\begin{definition}
A \textbf{mechanized SCM} $\mec$ is a tuple ($\widetilde{\mathcal{M}}$, $\mathcal{M}^{\bm{\Theta}}$) where 
\begin{enumerate}
    \p $\widetilde{\mathcal{M}}$ is a deterministic cyclic SCM with mechanism variables $\widetilde{\rV}$. 
    \p $\mathcal{M}^{\bm{\Theta}}$ is a parameterized SCM with object variables $\rV$. 
\end{enumerate}
We refer to $\rV$ as the \textbf{object nodes} and to $\widetilde{\rV}$ as the \textbf{mechanism nodes}. 
We assume that there is a one to one correspondence between the mechanism variables $\widetilde{\rV}$ and object variables $\rV$, and that $\mathcal{M}^{\bm{\Theta}}$'s parameter sets are $\Theta_V=\dom(\widetilde{V})$ for each $V\in \rV$. 
Every solution $\rs \in \Sol(\widetilde{\mathcal{M}})$ induces an SCM $\mathcal{M}^{\rs}$ with structural assignments 
$\{\mathcal{F}_V^{\proj_{\widetilde{V}}(\rs)} \mid V\in \rV\}$ and distribution $\mathbb{P}_{\rs}(\rV)$.  
By $\mathbb{P}_{\Sol(\widetilde{\mathcal{M}})}(\rV)$ we denote the set of distributions $\{\mathbb{P}_{\rs}(\rV)\mid \rs\in \Sol(\widetilde{\mathcal{M}})\}$.  
\end{definition}

To help the reader understand this formalism, we provide a worked out formalization of the `Battle of the Sexes' game as a mechanized SCM in \Cref{app: battle of the sexes}.

\section{Agency}
\label{sec:agency}

We define utilities as functions of nodes (i.e. object variables) rather than nodes in the causal graph itself, highlighting the constitutive rather than causal relationship between utilities and object variables \citep{xia2024neural}. 
For example, we may have two different agents, who both care about the number of bicycles in the world, but one agent wants to maximize the number of bicycles in the world, while the other agent wants to minimize the number of bicycles in the world. 
In that case, it would be problematic to have two different utility 
nodes because there would be a logical rather than causal relationship between the two utility nodes, 
which would violate the assumption of independent causal mechanisms \citep{Peters2017,jorgensen2025causal}. 

\begin{definition}
    \label{def: utility}
    Given a mechanized SCM $\mec$ with object variables $\rV$,
     we define a \textbf{utility function} as a function $\mathcal{U}: \dom(\rV)\to \mathbb{R}$.
\end{definition}

While we technically allow that a utility function $\mathcal{U}$ may depend on several object variables $\mathbf{U} \subseteq \mathbf{V}$, we will 
usually consider utilities that depend on a single object variable $U$. 
Given a utility function, we may wonder if any object level variables can be viewed as  
optimizing it.
For this purpose, we use rationality relations \citep{hammond2023reasoning}, which describe how an agent would react in a given context if employing a certain conception of rationality and having a certain utility function.

\begin{definition} 
Given a mechanized SCM $\mec$, a mechanism node $\widetilde{S}\in \widetilde{\rV}$ and a utility 
function $\mathcal{U}:\dom(\rV)\to \mathbb{R}$, a 
 \textbf{rationality relation} is a total relation $\mathcal{R}_{\widetilde{S}}
\subseteq \dom(\widetilde{\rV}\backslash\{\widetilde{S}\}) \times \dom (\widetilde{S})$.
If $\mathcal{F}_{\widetilde{S}}(\widetilde{\rc}) \in \mathcal{R}_{\widetilde{S}}(\widetilde{\rc})$
 \footnote{By $\mathcal{R}_{\widetilde{S}}(\widetilde{\rc})$ we denote the set 
 $\{\widetilde{s}\in \dom(\widetilde{S})\mid (\widetilde{\rc},\widetilde{s})\in \mathcal{R}_{\widetilde{S}}\}$.}
for every context $\widetilde
{\rc}\in \dom(\widetilde{\rV}\backslash\{\widetilde{S}\})$, then we say that $\widetilde{S}$ 
responds $\mathcal{R}$-rationally to utility $\mathcal{U}$.
\end{definition}

Intuitively, $\mathcal{R}_{\widetilde{S}}(\widetilde{\rc})$ specifies the set of policies an agent with 
decision node $S$ employing rationality relation $\mathcal{R}_{\widetilde{S}}$ could employ in context 
$\widetilde{\rc}$ if having utility function $\mathcal{U}$.

\begin{definition}
    \label{def: best response rationality}
Given a mechanism node $\widetilde{S}\in \widetilde{\rV}$ and a utility function $\mathcal{U}:\dom(\rV)\to \mathbb{R}$,
we define the \textbf{best response rationality relation} $\mathcal{R}^{\text{BR}}_{\widetilde{S}}$
 under utility $\mathcal{U}$ by the condition that 
\begin{align*}
    \widetilde{s}\in \mathcal{R}^{\text{BR}}_{\widetilde{S}}(\widetilde{\rc})
     \quad \text{ if and only if }  \quad
     \widetilde{s} \in \underset{\widetilde{s}'\in \dom(\widetilde{S})}{\argmax} \ \mathbb{E}_{\widetilde{s}',
     \widetilde{\rc}}(\mathcal{U}(\rV)) 
\end{align*}    
for every $\widetilde{\rc}\in \dom (\widetilde{\rV}\backslash\{\widetilde{S}\})$.
\end{definition}

 Best response rationality is the main rationality concept we consider. Sometimes, other rationality concepts,
such as subgame perfection, may be more appropriate \citep{hammond2023reasoning}. Different 
notions of rationality may be 
particularly relevant in
the context of collective agency for AI agents, where 
agents may be
interacting with copies of themselves or have 
access to each other's parameters \citep{critch2022cooperative, oesterheld2023similarity}.
 We leave a detailed investigation of various rationality relations for future work, but 
 in \Cref{app: first mover rationality}, we show that sometimes we do not have collective agency under best response 
 rationality, while we do have it for other rationality concepts.

Below we operationalize agency as rational, goal-driven behavior. 
While we do not claim that this is all that there is to the philosophical concept of agency, we do believe that this provides
a useful starting point for a formalization of agency that works across levels of abstraction. 

\begin{definition}
Given a mechanized SCM $\mec$, we say a mechanism variable $\widetilde{S}\in \widetilde{\rV}$ is an ($\mathcal{R}, \mathcal{U}$)-\textbf{agent}
 if $\widetilde{S}$ responds $\mathcal{R}$-rationally to utility $\mathcal{U}$.  
\end{definition}

\Citet{kenton2023discovering} discuss 
assumptions that make it possible to identify which nodes correspond to best-response rational agents and which nodes 
enter into their utility functions. 
Notice that every object variable is an agent with respect to best response rationality and a constant utility function. 
This is related to a classical insight in inverse reinforcement learning \citep{ng2000algorithms,skalse2024partial}, namely 
that reward functions are not generally identifiable from behavior. Likewise, if we allow any utility function, then 
any mechanism variable can be viewed as an agent. While modeling something as an agent optimizing for a constant utility function
does not make false predictions, it also does not make any non-vacuous predictions. 

\begin{definition}
    We say that a mechanism variable $\widetilde{S}\in \widetilde{\rV}$ is a \textbf{non-trivial ($\mathcal{R}, \mathcal{U}$)-agent} if it is an 
    ($\mathcal{R}, \mathcal{U}$)-agent and there exist two settings $\widetilde{\rc},\widetilde{\rc}'
    \in \dom(\widetilde{\rV}\backslash\{\widetilde{S}\})$ such that 
    $\mathbb{P}_{\widetilde{\rc},\mathcal{F}_{\widetilde{S}}(\widetilde{\rc})}(S\mid \PA_S)\neq 
    \mathbb{P}_{\widetilde{\rc}',\mathcal{F}_{\widetilde{S}}(\widetilde{\rc}')}(S\mid \PA_S)$
\end{definition}

\section{Collective Agency}
\label{sec:collective_agency}

\subsection{Mechanized Abstractions}

In this section, we refer to objects related to the high-level model using superscripts *. 
Given a low-level mechanized SCM $\mec$, 
and a corresponding high-level mechanized SCM $\mec^*$, we need to specify which of 
the low-level object variables $\rV$ 
correspond to which high-level object variables $V^*$. We call this correspondence a variable 
alignment. 

\begin{definition} 
Given two mechanized causal graphs $\mec$ and $\mec^*$, a \textbf{variable alignment} is a mapping $\part: \rV^*\to \mathcal{P}(\rV)\backslash\{\emptyset\}$ 
such that $\part_{V^*_1}\cap \part_{V^*_2}=\emptyset$ for any two variables $V_1^*,V_2^*\in \rV^*$ (where $\part_{V^*_1}:=\part(V^*_1)$). 
Given a variable alignment,
 we define $\part_{\widetilde{V}^*}\subseteq \widetilde{\rV}$ as those mechanism nodes 
 that correspond to the object variables in $\part_{V^*}$. 
\end{definition}

Given a variable alignment $\part$,
 a set of functions $\{\tau_{V^*}\}_{V^*\in \rV^*}$, where
  $\tau_{V^*}: \dom(\part_{V^*})\to \dom(V^*)$, induces 
  a function $\tau: \dom(\rV)\to \dom(\rV^*)$ given 
  by $\tau(\rv)=\bigcup_{V^*\in \rV^*} \tau_{V^*}(\proj_{\part_{V^*}}(\rv))$. We call 
  these functions \textbf{value mappings}.
  These functions determine
  how we translate values of low-level 
  object variables to values of high-level object variables.
In addition, we need to specify how we translate interventions 
on the low-level mechanisms variables to interventions
on the high-level mechanisms variables. These \textbf{intervention mappings} are given by partial functions 
$\{\omega_{\widetilde{V}^*}\}_{\widetilde{V}^*\in \widetilde{\rV}^*}$,
where $\omega_{\widetilde{V}^*}: \dom(\part_{\widetilde{V}^*})\rightharpoonup \dom(\widetilde{V}^*)$.
By $\text{dom}(\omega_{\widetilde{V}^*})$, 
we denote the subset of $\dom(\part_{\widetilde{V}^*})$ on 
which $\omega_{\widetilde{V}^*}$ is defined. For a subset 
$\widetilde{\rY}^*\subseteq \widetilde{\rV}^*$,
we define
$\text{dom}(\omega_{\widetilde{\rY}^*})=
    \bigtimes_{\widetilde{V}^*\in \widetilde{\rY}^*} \text{dom}(\omega_{\widetilde{V}^*})$.
The set of intervention mappings naturally induces a 
partial function $\omega$, from any setting of a subset of low-level collections 
$\bigcup_{\widetilde{V}^*\in \widetilde{\rY}^*} \part_{\widetilde{V}^*}$
 to a setting 
of $\widetilde{\rY}^*$, in particular, 
 given $\widetilde{\ry}\in \text{dom}(\omega_{\widetilde{\rY}^*})$, $\omega$ 
maps to $\bigcup_{\widetilde{V}^*\in \widetilde{\rY}^*} 
\omega_{\widetilde{V}^*}(\proj_{\part_{\widetilde{V}^*}}(\widetilde{\ry}))$.

\begin{definition}\label{def: collective agency}
    \label{def:constructive_abstraction}
    Assume that we have two mechanized causal 
    models $\mec$ and $\mec^*$.
    Let a node alignment $\part$ be given.
    We say that $\mec^*$ 
    is a \textbf{mechanized abstraction} of $\mec$ 
    under value 
    mappings $\{\tau_{V^*}\}_{V^*\in \rV^*} $ ($\tau_{V^*}:\dom(\part_{V^*})\to \dom(V^*)$), 
    and intervention mappings
     $\{\omega_{\widetilde{V}^*}\}_{\widetilde{V}^*\in \widetilde{\rV}^*}$ 
     ($\omega_{\widetilde{V}^*}: \dom(\part_{\widetilde{V}^*})\rightharpoonup \dom(\widetilde{V}^*)$),
    if 
    \begin{align}\label{eq: consistency}
        \mathbb{P}_{\Sol(\widetilde{\mathcal{M}};\widetilde{\ry})}(\tau(\rV))=
        \mathbb{P}_{\Sol(\widetilde{\mathcal{M}}^*;\omega(\widetilde{\ry}))}(\rV^*),
    \end{align}
    for any subset $\widetilde{\rY}^*\subseteq \widetilde{\rV}^*$, 
    and intervention $\widetilde{\ry}\in \text{dom}(\omega_{\widetilde{\rY}^*})$. 
    \end{definition}

Notice that \definitionref{def: collective agency} 
in particular requires observational consistency,
 that is, $\mathbb{P}_{\Sol(\widetilde{\mathcal{M}})}(\tau(\rV))
 =\mathbb{P}_{\Sol(\widetilde{\mathcal{M}}^*)}(\rV^*)$, 
 by considering the empty subset $\widetilde{\rY}^*=\emptyset$.

 Analogously to \citet{beckers2019abstracting}, we define strong mechanized abstractions by adding the requirement that 
 for any setting of a subset of high-level mechanisms,
 there exists a low-level intervention that implements it.

\begin{definition}
     We say that $\mec^*$ is a \textbf{strong mechanized abstraction} of $\mec$ under $\{\tau_{V^*}\}_{V^*\in \rV^*}$ and $\{\omega_{\widetilde{V}^*}\}_{\widetilde{V}^*\in \widetilde{\rV}^*}$ if
    $\mec^*$ is a mechanized abstraction of $\mec$ under $\{\tau_{V^*}\}_{V^*\in \rV^*}$ and $\{\omega_{\widetilde{V}^*}\}_{\widetilde{V}^*\in \widetilde{\rV}^*}$, and each $\omega_{\widetilde{V}^*}$ 
    is surjective onto $\dom(\widetilde{V}^*)$.
\end{definition}

If the mechanism models $\widetilde{\mathcal{M}}$ and $\widetilde{\mathcal{M}}^*$  
have no edges, then the definition above is an extension of constructive abstractions
\citep{beckers2019abstracting} that includes soft interventions a (hard intervention on 
mechanism variables may correspond to a soft intervention on the object variables). 
 Unlike previous work on abstractions of soft interventions \citep{massidda2023causal},
we do not impose any restrictions on $\omega$ except that it is induced from intervention mappings on the variable alignment. We believe that any further restrictions on $\omega$ must 
derive from context-specific considerations rather than being part of the definition. 
There are several ways in which this definition can be extended to approximate abstractions \citep{beckers2020approximate,rischel2021compositional} 
and we likewise believe that the appropriate generalization is context-dependent. In \Cref{sec:surrogate_modeling}, 
we consider restrictions on $\omega$ and a notion of approximate abstraction based on mean squared error for a fixed distribution over low-level
interventions. In other applications,
it is likely appropriate to consider KL divergence as is done, for example, in \citet{dyer2024a}.     

\subsection{Ruling out emergence of non-trivial agency}

We may want to rule out the emergence of agency at higher levels of abstraction. In this section, we provide a 
first example of a theorem that rules out that a node is a high-level agent based on properties of the 
low level. 

We say that a mechanism node $\widetilde{S}$ has an \textbf{independent mechanism} 
 if there exists a setting
$\widetilde{s}\in \dom(\widetilde{S})$ such that 
$\mathcal{F}_{\widetilde{S}}(\widetilde{\rc})=\widetilde{s}$ for all 
$\widetilde{\rc}\in \dom(\widetilde{\rV}\backslash\{\widetilde{S}\})$.

\begin{proposition}\label{prop:no_emergence_nontrivial_agency}
\Copy{prop:no_emergence_nontrivial_agency}{Let mechanized models $\mec$ and $\mec^*$ be given. Assume that $\mec^*$ is a strong
    mechanized abstraction of $\mec$ for some value mappings $\{\tau_{V^*}\}_{V^*\in \rV^*}$ and 
    intervention mappings $\{\omega_{\widetilde{V}^*}\}_{\widetilde{V}^*\in \widetilde{\rV}^*}$. 
    Let $\widetilde{S}^*\in \widetilde{\rV}^*$ be a high-level mechanism variable. Assume 
    that (i) $\tau_{\PA_{S^*}}$ is injective, and (ii) the mechanism nodes in $\part_{\widetilde{S}^*}$ have independent mechanisms.
    Then $\widetilde{S}^*$ is not a non-trivial agent in $\mec^*$. 
}
[See \Cref{app:proof_no_emergence_nontrivial_agency} for the proof.]
\end{proposition}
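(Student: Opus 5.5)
We argue by contradiction. Suppose $\widetilde{S}^*$ is a non-trivial agent in $\mec^*$. Then there are high-level contexts $\widetilde{\rc}^*,\widetilde{\rc}'^*\in\dom(\widetilde{\rV}^*\backslash\{\widetilde{S}^*\})$ with
$\mathbb{P}_{\widetilde{\rc}^*,\mathcal{F}_{\widetilde{S}^*}(\widetilde{\rc}^*)}(S^*\mid\PA_{S^*})\neq\mathbb{P}_{\widetilde{\rc}'^*,\mathcal{F}_{\widetilde{S}^*}(\widetilde{\rc}'^*)}(S^*\mid\PA_{S^*})$.
The strategy is to show that both conditionals equal the same low-level quantity transported through $\tau$. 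The transport uses strongness and the consistency equation \eqref{eq: consistency}; that this quantity does not depend on the context uses (ii); and the passage from $\tau$-images back to conditionals uses (i).

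\textbf{Lifting the contexts.} Since each $\omega_{\widetilde{V}^*}$ is surjective, we can pick low-level settings $\widetilde{\ry}\in\text{dom}(\omega_{\widetilde{\rY}^*})$ with $\widetilde{\rY}^*=\widetilde{\rV}^*\backslash\{\widetilde{S}^*\}$ and $\omega(\widetilde{\ry})=\widetilde{\rc}^*$; similarly pick $\widetilde{\ry}'$ for $\widetilde{\rc}'^*$. Intervening on all high-level mechanisms except $\widetilde{S}^*$ leaves a unique high-level solution, namely the one in which $\widetilde{S}^*$ takes the value $\mathcal{F}_{\widetilde{S}^*}(\widetilde{\rc}^*)$. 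On the low level, the intervention $\widetilde{\ry}$ fixes every mechanism node outside $\part_{\widetilde{S}^*}$. By (ii), each node in $\part_{\widetilde{S}^*}$ is then set to its constant value, and these constants are the same under $\widetilde{\ry}$ and $\widetilde{\ry}'$. Consequently each low-level intervened model has a unique solution, call them $\rs$ and $\rs'$. They agree on $\part_{\widetilde{S}^*}$, so the mechanisms $\mathcal{F}_{V}$ for $V\in\part_{S^*}$ are identical in $\mathcal{M}^{\rs}$ and $\mathcal{M}^{\rs'}$.

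\textbf{Transporting the conditional.} Applying \eqref{eq: consistency} gives $\mathbb{P}_{\rs}(\tau(\rV))=\mathbb{P}^*_{\widetilde{\rc}^*,\cdot}(\rV^*)$, and likewise for $\rs'$. We must therefore show that $\mathbb{P}_{\rs}(\tau_{S^*}(\part_{S^*})\mid\tau_{\PA_{S^*}}(\part_{\PA_{S^*}}))$ coincides with the corresponding $\rs'$ quantity. By injectivity (i), conditioning on $\tau_{\PA_{S^*}}=\pa^*$ is the same as conditioning on the single low-level value $\part_{\PA_{S^*}}=\tau^{-1}(\pa^*)$. We would like this to determine the conditional law of $\part_{S^*}$ via the low-level mechanisms of $\part_{S^*}$, which are shared between $\rs$ and $\rs'$. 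Two low-level mechanisms that agree should give identical kernels, and hence identical high-level conditionals, which contradicts non-triviality.

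\textbf{Main obstacle.} The difficult step is the claim that the conditional law of $\part_{S^*}$ given $\part_{\PA_{S^*}}$ depends only on the mechanisms of $\part_{S^*}$. The low-level graph may contain variables outside $\part_{S^*}\cup\part_{\PA_{S^*}}$ that confound or mediate between them, and their mechanisms differ between $\rs$ and $\rs'$.

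My first attempt will exploit the abstraction itself. Consistency holds for all high-level interventions, including further interventions on $\widetilde{\PA}_{S^*}$ and on other high-level mechanisms. The Markov property of $\mathcal{G}^*$ then forces $\mathbb{P}^*(S^*\mid\PA_{S^*})$ to be invariant under changes of the mechanisms of non-descendants. I will try to show that changing the context from $\widetilde{\rc}^*$ to $\widetilde{\rc}'^*$ amounts to a modularity-respecting change at the high level. That change cannot affect the $S^*$ kernel, because the low-level implementation of $\widetilde{S}^*$ is the same fixed constant tuple in both cases. If this fails in general, I will fall back on an implicit structural assumption: low-level parents of $\part_{S^*}$ lie in $\part_{S^*}\cup\part_{\PA_{S^*}}$ (a graph compatibility condition). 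Under that assumption the factorization argument goes through directly.
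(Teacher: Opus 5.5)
Your outline matches the paper's proof step for step: lift both contexts by surjectivity, use (ii) to equate the low-level conditionals of $\part_{S^*}$ given $\part_{\PA_{S^*}}$, transport through $\tau$ via (i), and conclude by consistency. The paper handles your ``main obstacle'' in one sentence with no further argument: ``since the mechanism nodes in $\part_{\widetilde{S}^*}$ have independent mechanisms, $\mathbb{P}_{\Sol(\widetilde{\mathcal{M}};\widetilde{\ry}_2)}(\part_{S^*}\mid\part_{\PA_{S^*}})=\mathbb{P}_{\Sol(\widetilde{\mathcal{M}};\widetilde{\ry}_1)}(\part_{S^*}\mid\part_{\PA_{S^*}})$''. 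Your suspicion is right. That step is false, and so is the proposition as stated, so the gap you leave open cannot be closed without new hypotheses.

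\textbf{A counterexample.} At the low level take binary $X\to S$, with $X:=\widetilde{X}\in\{0,1\}$ and $S:=\widetilde{S}(X)$, where $\widetilde{S}$ ranges over maps $\{0,1\}\to\{0,1\}$. The mechanisms are constant: $\widetilde{X}:=0$ and $\widetilde{S}:=\mathrm{id}$. At the high level take $X^*,S^*$ with no edge, $X^*:=\widetilde{X}^*$, $S^*:=\widetilde{S}^*$, $\widetilde{X}^*:=0$ and $\widetilde{S}^*:=\widetilde{X}^*$. Let $\tau$ and $\omega_{\widetilde{X}^*}$ be identities, and define $\omega_{\widetilde{S}^*}$ only on the two constant maps, sending the constant $s$ to $s$.

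The consistency checks hold under no intervention, under intervention on $\widetilde{X}$, on $\widetilde{S}$, and on both. Both intervention maps are surjective, (i) is vacuous because $\PA_{S^*}=\emptyset$, and (ii) holds. Yet $\widetilde{S}^*$ is a best-response agent for $\mathcal{U}(\rv^*)=\mathbbm{1}(x^*=s^*)$, and $\mathbb{P}(S^*)$ is $\delta_0$ or $\delta_1$ depending on the context. The low-level object edge from $X\notin\part_{\PA_{S^*}}$ into $S$ reappears at the high level as a dependence of the mechanism $\widetilde{S}^*$ on $\widetilde{X}^*$. This is exactly the scenario you worried about.

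\textbf{Your first route.} It cannot succeed as planned. The Markov property of $\mathcal{G}^*$ makes the $S^*$-kernel invariant only while $\widetilde{S}^*$ is held fixed, and here it is precisely $\mathcal{F}_{\widetilde{S}^*}(\widetilde{\rc}^*)$ that moves with the context. Consistency never ties that value to the low-level constant $\widetilde{s}_0$ unless $\widetilde{s}_0\in\text{dom}(\omega_{\widetilde{S}^*})$; in the example, $\mathrm{id}$ cannot be added to that domain. If you assume this extra condition, your first route becomes a short complete proof that does not even need (i). Intervening on $\part_{\widetilde{S}^*}$ with $\widetilde{s}_0$ leaves $\Sol(\widetilde{\mathcal{M}};\widetilde{\ry})$ unchanged. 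Consistency for $\widetilde{\ry}$ and for $\widetilde{\ry}\cup\widetilde{s}_0$ then gives $\mathbb{P}_{\Sol(\widetilde{\mathcal{M}}^*;\widetilde{\rc}^*)}(\rV^*)=\mathbb{P}_{\Sol(\widetilde{\mathcal{M}}^*;\widetilde{\rc}^*\cup\omega(\widetilde{s}_0))}(\rV^*)$. The right-hand side has the $S^*$-kernel of $\mathcal{F}^{\omega(\widetilde{s}_0)}_{S^*}$, the same for every context.

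\textbf{Your fallback.} It is also a new hypothesis, and as stated it is too weak. You additionally need that no node of $\part_{\PA_{S^*}}$ is a low-level descendant of $\part_{S^*}$. For instance, take $S_a\to P\to S_b$ and $S_a\to S_b$, with $\part_{S^*}=\{S_a,S_b\}$, $\tau_{S^*}(s_a,s_b)=s_b$ and $\part_{P^*}=\{P\}$. Fix the mechanisms $S_a\sim\mathrm{Bern}(1/2)$ and $S_b:=S_a$. Switching $\widetilde{P}$ between ``copy $S_a$'' and ``constant $0$'' changes $\mathbb{P}(S_b\mid P=0)$ from $\delta_0$ to $\mathrm{Bern}(1/2)$. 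This completes to a strong abstraction satisfying (i), (ii) and your parent condition: restrict $\text{dom}(\omega_{\widetilde{S}^*})$ to settings with $\widetilde{S}_a=\mathrm{Bern}(1/2)$ and an $S_b$-kernel that ignores $S_a$, and let $\widetilde{S}^*$ respond to $\widetilde{P}^*$.

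With both graph conditions, removing from $\rV$ the descendants of $\part_{S^*}$ that lie outside $\part_{S^*}$ leaves an ancestral set. Hence $\mathbb{P}(\part_{S^*}\mid\part_{\PA_{S^*}})=\prod_{V\in\part_{S^*}}\mathbb{P}(V\mid\PA_V)$ is fixed by the constant mechanisms, and the rest of your argument (and the paper's) goes through.
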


We think that having a more complete theory of the (non-)emergence of agency is an important direction for future work.

\subsection{Example: Actor-Critic Agents (Revisited)}
\label{sec: actor-critic}
\begin{figure}
    \centering
    \resizebox{0.85\linewidth}{!}{
    \begin{tikzpicture}[
  x=3.3cm, y=2.2cm, >=Latex,
  every node/.style={font=\Large},   %
  var/.style   ={circle,draw,fill=white,minimum size=12mm,inner sep=0pt,thick},
  tilde/.style ={rounded corners=2pt,draw,fill=black,text=white,minimum width=12mm,minimum height=12mm,inner sep=0pt,thick},
  edge/.style      ={->,thick},
  grayedge/.style  ={->,thick,draw=gray!60},
  dashededge/.style={->,thick,dashed}
]
\tikzset{pics/model/.style={code={
  \node[] at (-0.5,2.2) {$\mec$};
  \node[tilde] (tA) at (0,2)  {$\widetilde{A}$};
  \node[tilde] (tS) at (1,2)  {$\widetilde{S}$};
  \node[tilde] (tR) at (2,2)  {$\widetilde{R}$};

  \node[var]   (A)  at (0,1)  {$A$};
  \node[var]   (S)  at (1,1)  {$S$};
  \node[var]   (R)  at (2,1)  {$R$};

  \node[var]   (Q)  at (0,0)  {$Q$};
  \node[var]   (Y)  at (1,0)  {$Y$};
  \node[var]   (W)  at (2,0)  {$W$};

  \node[tilde] (tQ) at (0,-1) {$\widetilde{Q}$};
  \node[tilde] (tY) at (1,-1) {$\widetilde{Y}$};
  \node[tilde] (tW) at (2,-1) {$\widetilde{W}$};

  \draw[edge] (A) -- (Y);
  \draw[edge] (A) -- (S);
  \draw[edge] (Q) -- (Y);
  \draw[edge] (S) -- (R);
  \draw[edge] (R) -- (W);
  \draw[edge] (Y) -- (W);

  \draw[grayedge] (tY) -- (Y);
  \draw[grayedge] (tR) -- (R);
  \draw[grayedge] (tQ) -- (Q);
  \draw[grayedge] (tA) -- (A);
  \draw[grayedge] (tS) -- (S);
  \draw[grayedge] (tW) -- (W);

  \draw[dashededge] (tS) -- (tQ);
  \draw[dashededge] (tR) to[bend left=30] (tQ.east);
  \draw[dashededge] (tQ) to[bend left=20] (tA.south west);

  \coordinate (-center) at (1,0.5); %
  \coordinate (-west)   at (-.4,0.5);
  \coordinate (-east)   at ( 2.4,0.5);
}}}

\tikzset{pics/model1/.style={code={
\node[] at (-0.5,1.5) {$\mec^*$};
  \node[tilde] (tA) at (0,1)  {$\widetilde{A}^*$};
  \node[tilde] (tS) at (1,1)  {$\widetilde{S}^*$};
  \node[tilde] (tR) at (2,1)  {$\widetilde{R}^*$};

  \node[var]   (A)  at (0,0)  {$A^*$};
  \node[var]   (S)  at (1,0)  {$S^*$};
  \node[var]   (R)  at (2,0)  {$R^*$};

  \draw[edge] (S) -- (R);
  \draw[edge] (A) -- (S);

  \draw[grayedge] (tR) -- (R);
  \draw[grayedge] (tA) -- (A);
  \draw[grayedge] (tS) -- (S);

  \draw[dashededge] (tS) -- (tA);
  \draw[dashededge] (tR) to[bend right=30] (tA.north east);

  \coordinate (-center) at (1,0.5); %
  \coordinate (-west)   at (-.4,0.5);
  \coordinate (-east)   at ( 2.4,0.5);
}}}

\pic (M1) at (0,0) {model};
\pic (M2) at (3.8,0) {model1}; %

\draw[very thick,->] (M1-east) -- node[above, scale=1]{$\tau,\omega$} (M2-west);

\end{tikzpicture}
}
    \caption{A graphical representation of a causal abstraction applied to mechanized SCMs representing the AC example from \Cref{ex: actor-critic introduction}  (see \Cref{app: actor-critic details} for the explicit construction of the abstraction).
    On the left hand side, 
    we have the low-level model $\mec$.
     On the right, we have a high-level model $\mec^*$, which 
     is a valid abstraction of the low-level model 
     under $\tau$ and $\omega$. 
     By examining the high-level model, we can see
      that $A^*$ is an $\mathcal{R}^{\text{BR}}$-rational 
      agent with utility equal to the reward: $\mathcal{U}(\rv^*)=\proj_{R^*}(\rv^*)$.
    }
    \label{fig: actor-critic}
\end{figure}
    We revisit the one-step AC agent (as described in \citet{kenton2023discovering} and \Cref{ex: actor-critic introduction}).
    Here the critic $\widetilde{Q}$ is tasked with predicting 
    the reward $R$ for different 
    actions that the actor $\widetilde{A}$ may take.
    The action $A$ taken by the
    actor $\widetilde{A}$ affects the state, 
    and in turn, the reward.
    \citet{kenton2023discovering} hypothesize that one can, 
    at a more coarse-grained level, 
    view this as a single-agent system.
    This in turn explains why 
    the system intuitively has an incentive 
    to manipulate the state $S$, 
    even though if one applies the graphical criterion for 
    single-agent graphs \citep{everitt2021agent}, it suggests 
    that neither $\widetilde{A}$ nor $\widetilde{Q}$  
    have an incentive to manipulate the state $S$.  See \Cref{app: actor-critic details} for details about
    how to formalize this setup in terms of mechanized SCMs.

\begin{proposition}\label{prop:actor_critic_abstraction}
\Copy{prop:actor_critic_abstraction}{In \Cref{fig: actor-critic} 
(see also \Cref{app: actor-critic details} for details), $\mec^*$
 is a strong mechanized abstraction of $\mec$. 
 Furthermore, $\widetilde{A}^*$ in the high-level 
 model $\mec^*$ is an $(\mathcal{R}^{\text{BR}},\mathcal{U})$-agent
  with utility equal to the reward $\mathcal{U}(\rv^*)=\proj_{R^*}(\rv^*)$.} 
[See \Cref{app:proof_actor_critic} for the proof.] 
\end{proposition}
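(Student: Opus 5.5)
The plan is to verify the two claims of \Cref{prop:actor_critic_abstraction} separately: first the abstraction equation \eqref{eq: consistency} together with surjectivity of the $\omega_{\widetilde{V}^*}$, and then the rationality of $\widetilde{A}^*$. I will use the construction of \Cref{app: actor-critic details} as I expect it to look; I have not checked it against the appendix.

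\textbf{Expected construction.} The variable alignment is $\part_{A^*}=\{A,Q,Y,W\}$, $\part_{S^*}=\{S\}$ and $\part_{R^*}=\{R\}$. The value maps are $\tau_{A^*}=\proj_A$ and the identity on $S$ and $R$. The intervention maps $\omega_{\widetilde{S}^*}$ and $\omega_{\widetilde{R}^*}$ are identities. $\omega_{\widetilde{A}^*}$ sends a setting $(\widetilde{a},\widetilde{q},\widetilde{y},\widetilde{w})$ to $\widetilde{a}$, restricted to settings where $\widetilde{y},\widetilde{w}$ are the canonical mechanisms $y=q(a)$ and $w=\ell(y,r)$. Surjectivity of all three maps is immediate, because every high-level policy $\widetilde{a}^*$ is itself a low-level actor policy.

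\textbf{Solution sets of the mechanism models.} In $\widetilde{\mathcal{M}}$ the mechanisms $\widetilde{S},\widetilde{R},\widetilde{Y},\widetilde{W}$ are independent. The critic's mechanism takes $(\widetilde{s},\widetilde{r})$ to the Q-function $a\mapsto\mathbb{E}_{\widetilde{s},\widetilde{r}}[R\mid do(A=a)]$. Since $\widetilde{Q}$ has parents only $\widetilde{S},\widetilde{R}$, this is a best response to $W$ (for a loss such as squared error whose minimiser is the conditional mean) in every context, whatever actor policy is played. The actor's mechanism maps $\widetilde{q}$ to a deterministic policy in $\argmax_a q(a)$ with a fixed tie-breaking rule. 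Hence $\Sol(\widetilde{\mathcal{M}};\widetilde{\ry})$ is computed by substitution, and in the unintervened case the actor plays $\argmax_a \mathbb{E}[R\mid do(a)]$. The high-level mechanism $\mathcal{F}_{\widetilde{A}^*}(\widetilde{s}^*,\widetilde{r}^*)$ is defined as this same argmax policy with the same tie-breaking.

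\textbf{Consistency by cases.} I check \eqref{eq: consistency} according to whether $\widetilde{A}^*\in\widetilde{\rY}^*$.
\begin{itemize}
\item If $\widetilde{A}^*$ is intervened on, the low-level intervention fixes $\widetilde{A}=\widetilde{a}$. The distribution of $(A,S,R)$ then depends only on $\widetilde{a}$ and on the (possibly intervened) $\widetilde{s},\widetilde{r}$, and $Q,Y,W$ are discarded by $\tau$. This matches the high level under $\omega$.
\item If $\widetilde{A}^*$ is not intervened on, the low-level actor plays the argmax policy for whatever $\widetilde{s},\widetilde{r}$ are set. The high-level actor plays the identical policy, so the induced laws of $(A,S,R)$ and $(A^*,S^*,R^*)$ coincide.
\end{itemize}
The second claim follows directly: for every context $(\widetilde{s}^*,\widetilde{r}^*)$ the high-level policy lies in $\argmax_{\widetilde{a}'}\mathbb{E}_{\widetilde{a}',\widetilde{s}^*,\widetilde{r}^*}[R^*]$. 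That is exactly $\mathcal{R}^{\text{BR}}$-rationality with $\mathcal{U}(\rv^*)=\proj_{R^*}(\rv^*)$ in the sense of \definitionref{def: best response rationality}.

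\textbf{Main obstacle.} The delicate point is the critic's best response. If the critic's loss were evaluated only under the action distribution the actor actually plays, $q$ would be pinned down only on the support of $A$. Off-support values could then mislead the actor and break both consistency and optimality. I would handle this by relying on the fact that $\widetilde{Q}$ does not depend on $\widetilde{A}$ in the mechanism graph. This lets me choose the critic's mechanism as the everywhere-correct interventional prediction $a\mapsto\mathbb{E}[R\mid do(A=a)]$, which remains a best response for every actor policy. A secondary issue is ties in the argmax; fixing the same tie-breaking at both levels keeps the solution sets equal rather than merely overlapping.
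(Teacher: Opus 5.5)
Your proposal is correct and follows essentially the paper's route. You compute the unique solutions by substitution along the acyclic mechanism graph $\widetilde{S},\widetilde{R}\to\widetilde{Q}\to\widetilde{A}$, use the everywhere-correct critic $\widetilde{Q}[a]=r_0(1-s_a)+r_1 s_a$ with the same $\le$ tie-breaking at both levels, and check the argmax for $\widetilde{A}^*$ directly; your case split on whether $\widetilde{A}^*$ is intervened on is, if anything, more complete than the paper's single worked case of intervening on $\{\widetilde{R},\widetilde{S}\}$.

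The only discrepancy is the alignment. The paper's maps are identities, $\tau_{A^*}(a)=a$ and $\omega_{\widetilde{A}^*}(\widetilde{a})=\widetilde{a}$, so $A^*$ is aligned with $\{A\}$ alone and $Q,Y,W$ are left unaligned rather than bundled into $A^*$. Since $\widetilde{Y},\widetilde{W}$ have range $\{1\}$, your canonical-mechanism restriction is vacuous anyway. Your first case goes through verbatim under the paper's alignment, because fixing $\widetilde{A}$ already pins down $A$ whatever $\widetilde{Q}$ does.
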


Note that while the AC example  `marginalizes' out one of the agents (namely 
the critic), our framework allows for more general types of collective agency. 
For example, in the next section, we consider aggregating individuals into a single agent. 

\section{Example: Using Collective Agency for Surrogate Modeling}\label{sec:surrogate_modeling}

\begin{figure}[ht]
    \centering
    \resizebox{0.8\textwidth}{!}{
        \begin{tikzpicture}[
  node distance=1.0cm and 1.0cm,
  >={Latex}, %
  semithick,
  every node/.style={font=\small},
  var/.style   ={circle,draw,fill=white,minimum size=8mm,inner sep=0pt,semithick},
  mech/.style ={rounded corners=2pt,draw,fill=black,text=white,minimum width=8mm,minimum height=8mm,inner sep=0pt,semithick},
  edge/.style      ={->,semithick},
  grayedge/.style  ={->,semithick,draw=gray!60},
  dashededge/.style={->,semithick,dashed},
  container/.style={draw=gray!50, dashed, rounded corners, inner sep=0.3cm}
]

\node[mech] (mt11) {$\widetilde{v}_{1,1}$};
\node[var, below=0.4cm of mt11] (v11) {$v_{1,1}$};
\draw[grayedge] (mt11) -- (v11);

\node[mech, right=0.5cm of mt11] (mt12) {$\widetilde{v}_{1,2}$};
\node[var, below=0.4cm of mt12] (v12) {$v_{1,2}$};
\draw[grayedge] (mt12) -- (v12);

\node[right=0.2cm of mt12] (d1t) {$\dots$};
\node[right=0.2cm of v12] (d1v) {$\dots$};

\node[mech, right=0.2cm of d1t] (mt1n) {$\widetilde{v}_{1,n}$};
\node[var, below=0.4cm of mt1n] (v1n) {$v_{1,n}$};
\draw[grayedge] (mt1n) -- (v1n);

\node[var, below=1.5cm of v12] (q1) {$q_1$};
\node[mech, left=0.8cm of q1] (mq1) {$\widetilde{q}_1$};
\draw[grayedge] (mq1) -- (q1);

\draw[edge] (v11) -- (q1);
\draw[edge] (v12) -- (q1);
\draw[edge] (v1n) -- (q1);

\node[mech, right=3.5cm of mt1n] (mt21) {$\widetilde{v}_{2,1}$};
\node[var, below=0.4cm of mt21] (v21) {$v_{2,1}$};
\draw[grayedge] (mt21) -- (v21);

\node[mech, right=0.5cm of mt21] (mt22) {$\widetilde{v}_{2,2}$};
\node[var, below=0.4cm of mt22] (v22) {$v_{2,2}$};
\draw[grayedge] (mt22) -- (v22);

\node[right=0.2cm of mt22] (d2t) {$\dots$};
\node[right=0.2cm of v22] (d2v) {$\dots$};

\node[mech, right=0.2cm of d2t] (mt2n) {$\widetilde{v}_{2,n}$};
\node[var, below=0.4cm of mt2n] (v2n) {$v_{2,n}$};
\draw[grayedge] (mt2n) -- (v2n);

\node[var, below=1.5cm of v22] (q2) {$q_2$};
\node[mech, right=0.8cm of q2] (mq2) {$\widetilde{q}_2$};
\draw[grayedge] (mq2) -- (q2);

\draw[edge] (v21) -- (q2);
\draw[edge] (v22) -- (q2);
\draw[edge] (v2n) -- (q2);

\node[var] (u_all) at ($(q1)!0.5!(q2)$) {$\bm{U}$};
\node[mech, above=0.4cm of u_all] (mu_all) {$\widetilde{\bm{U}}$};
\draw[grayedge] (mu_all) -- (u_all);

\node[right=0.0cm of mu_all] (hammer) {
    \tikz[baseline, scale=0.3, rotate=25]{
        \draw[fill=green!40, semithick] (-0.2, 0) rectangle (0.2, 1.5); %
        \draw[fill=green!60, semithick] (-0.8, 1.5) rectangle (0.8, 2.2); %
    }
};

\draw[edge] (q1) -- (u_all);
\draw[edge] (q2) -- (u_all);

\node[container, fit=(mt11) (mt1n) (q1) (mq1), label={[font=\small]south:Low-Level Country 1}] (c1_box) {};
\node[container, fit=(mt21) (mt2n) (q2) (mq2), label={[font=\small]south:Low-Level Country 2}] (c2_box) {};

\coordinate (bar_left) at ([yshift=1.0cm]mt11.north);
\coordinate (bar_right) at ([yshift=1.0cm]mt2n.north);

\draw[dashed, semithick] (bar_left) -- (bar_right) node[midway, above, font=\footnotesize] {Global Interaction};

\draw[dashededge] (mt11 |- bar_left) -- (mt11);
\draw[dashededge] (mt12 |- bar_left) -- (mt12);
\draw[dashededge] (mt1n |- bar_left) -- (mt1n);

\draw[dashededge] (mt21 |- bar_left) -- (mt21);
\draw[dashededge] (mt22 |- bar_left) -- (mt22);
\draw[dashededge] (mt2n |- bar_left) -- (mt2n);

\draw[dashed, semithick] 
    (mq1.north) to[out=110, in=180] (bar_left);

\draw[dashed, semithick] 
    (mq2.north) to[out=45, in=0, looseness=1.3] (bar_right);

\draw[dashed, semithick] 
    (mu_all.north) to[out=90, in=270] ($(bar_left)!0.5!(bar_right)$);

\node[mech, below=2.5cm of q1] (M1) {$\widetilde{q}_1$};
\node[var, below=0.4cm of M1] (Q1) {$q_1$};
\draw[grayedge] (M1) -- (Q1);

\node[mech, below=2.5cm of q2] (M2) {$\widetilde{q}_2$};
\node[var, below=0.4cm of M2] (Q2) {$q_2$};
\draw[grayedge] (M2) -- (Q2);

\node[container, fit=(M1) (Q1), label={[font=\small]south:High-Level Country 1}] (C1_high) {};
\node[container, fit=(M2) (Q2), label={[font=\small]south:High-Level Country 2}] (C2_high) {};

\node[var] (U_high) at ($(Q1)!0.5!(Q2)$) {$\bm{U}$};
\node[mech, above=0.4cm of U_high] (MU_high) {$\widetilde{\bm{U}}$};
\draw[grayedge] (MU_high) -- (U_high);

\node[right=0.0cm of MU_high] (hammer_high) {
    $\omega\left(
    \tikz[baseline={([yshift=-0.6ex]current bounding box.center)}, scale=0.3, rotate=25]{
        \draw[fill=green!40, semithick] (-0.2, 0) rectangle (0.2, 1.5); %
        \draw[fill=green!60, semithick] (-0.8, 1.5) rectangle (0.8, 2.2); %
    }
    \right)$
};

\draw[edge] (Q1) -- (U_high);
\draw[edge] (Q2) -- (U_high);

\draw[dashededge, <->, bend left=20] (M1) to node[midway] (interaction_center) {} (M2);
\draw[dashed, semithick] (MU_high) -- (interaction_center);

\coordinate (center_low) at ($(q1)!0.5!(q2)$);
\coordinate (center_high) at ($(Q1)!0.5!(Q2)$);

\coordinate (arrow_start) at (center_low |- c1_box.south);
\coordinate (arrow_end) at (center_high |- C1_high.north);

\draw[->, thick] ([yshift=-0.2cm]arrow_start) -- node[midway, right] {$\tau, \omega$} ([yshift=0.4cm]arrow_end);

\path (bar_left) ++(0,0.5) coordinate (top_margin);

\end{tikzpicture}
    }
    \caption{At the low level,
    the global interaction includes edges between every pair of citizens and from the voting mechanism
    $\widetilde{q}_1$ and $\widetilde{q}_2$ to every citizen. At the high level, we view the countries as
    being individual agents, picking the pollution level in response to the other countries.
    Since we are intervening on the citizens preferences, we explicitly include a (vector valued) utility
    node $\widetilde{\bm{U}}$.
    At the low level, it has dimension equal to the number of citizens. At the
    high level, it has dimension equal to the number of countries. In the figure and the main text, we omit superscripts
    (*) for the high level.}
    \label{fig:voting_abstraction}
\end{figure}

In this section we show an application of collective agency for 
surrogate modeling. Specifically, we investigate the degree to which it makes
sense to model countries as collective agents under various voting mechanisms. 
\citet{dyer2024a} introduced the idea of using
causal abstractions for interventionally consistent surrogate modeling. 
We build on this work by extending it to mechanized abstractions with agents. 
There are at least three potential benefits of using surrogate models with collective agents:
1) the low-level model is often a black box where new interventions cannot be simulated;\footnote{This is essentially the point of discovering psychological regularities. It is much easier to predict people's behavior based
on abstract models of beliefs and preferences than based on neurophysiology \citep{dennett1989intentional}.}
2) computing low-level equilibria may be computationally expensive, which hinders evaluations of new interventions; and
3) agency-based surrogate models may be useful for assessing abstract counterfactuals such as ``how would country A react if country B adopts policy X?''.

While we believe that there are practical applications where surrogate modeling with agents is useful, the main point of the example below is to provide a theoretical illustration and proof of concept. We therefore prioritize simplicity over realistic assumptions.

\subsection{The Low-Level Model}
We have $C$ countries and each country $c$ has $N_c$ citizens. 
Each country employs a voting mechanism to determine the level of pollution $q_c$. We assume that individual $i$ from country $c$ has utility
$$U_{ci}(q_c,q_{-c})=a_{ci}q_c-b_{ci}q_c^2-d_{ci}Q_W^2,$$
where $q_{-c}$ are the pollution levels of countries other than $c$, $a_{ci},b_{ci},d_{ci}\in \mathbb{R}$ are individual parameters, and $Q_W=\sum_c q_c$.
We consider interventions on individuals' preferences for pollution, 
for example, implemented by taxation or advocacy, 
such that  $a_{ci}$ is replaced with $a_{ci}-\lambda_{ci}\geq 0$.  

We assume that the result of 
the countries' votes is a 
Nash equilibrium (NE) where every citizen of every country is a player. 
This NE may be computationally expensive to evaluate or, if the low-level model is not understood, can only be evaluated by playing it out in the physical world.
Rather than reasoning about NEs containing every 
citizen of every country, it may be possible to 
model the situations at a higher level of 
abstraction where countries are modeled as agents.

\subsection{The High-Level Model}
We use a parametric model for country
 utilities, which mimics the utilities of individual agents:
\begin{align}\label{eq: high level utility} 
U_c(q_c,q_{-c})=A_cq_c-B_cq_c^2-D_cQ_W^2.
\end{align}

Assuming that each country is an agent with this utility function,
it is easy to derive that under NE, the total amount of pollution is
$$Q_W^{\text{NE}}=\frac{\frac{1}{2}\sum_c\frac{A_c}{B_c}}{1+\sum_c\frac{D_c}{B_c}},$$ and the individual contributions are
\begin{align}\label{eq: ne}
q_c^{\text{NE}}
= \frac{A_c}{2B_c}
-\frac{D_c}{B_c}Q_W^{\text{NE}}.
\end{align}

Notice that there is no guarantee that the voting mechanisms will 
produce pollution levels that are correctly modeled 
by the countries playing an NE according
 to a utility function parameterized by \Cref{eq: high level utility}.
This is a modeling choice that may or 
may not be appropriate depending on the actual voting
 mechanism that is used and the utility functions of the individual citizens. 

\subsection{Connecting the Low-Level and High-Level Models}\label{connection}
We want to learn a mapping $\omega$ from the vector intervention $\bm{\lambda}:=(\lambda_{ci})_{ci}$ to an intervention on $(A_c,B_c,D_c)_c$ such that the high-level model is a mechanized abstraction of the low-level model (see \Cref{fig:voting_abstraction}).
In other words, we want to see if we can view the countries as collective agents.
Formally, $\omega$ ought to be such that
$$\mathbb{P}_{\Sol(\widetilde{\mathcal{M}}; \bm{\lambda})}((q_c)_c)
= \mathbb{P}_{\Sol(\widetilde{\mathcal{M}}^*; \omega(\bm{\lambda}))}((q_c)_c)$$
where we assume that solutions are given by best response rationality.
In words, we want that applying intervention $\bm{\lambda}$ to the citizen preferences results in a low-level NE that equals the high-level NE under the $\omega(\bm{\lambda})$ intervention in the high-level model.  

Notice that the high-level NE only depends on the parameters $A_c$,$ B_c$, and $D_c$ through the fractions $\alpha_c:=\frac{A_c}{B_c}$ and $\delta_c:=\frac{D_c}{B_c}$. 
A single sample
is not enough to identify $\alpha_c$ and $\delta_c$
 (for each country, we have a single equation -- \Cref{eq: ne} -- and two unknowns).
 Therefore we need to make assumptions about how our intervention 
 affects the high-level utility function. For simplicity, we assume that the interventions only 
 affect the $\alpha_c$'s and not the $\delta_c$'s (which we know to be true for VCG voting; see \Cref{app: details voting mechanisms}). 
 Whether this assumption is reasonable 
 for other voting mechanisms is an empirical question, which we explore in the results
 section.

Using this assumption, we estimate $\omega$ using a two-step procedure.
First, for each country $c$, 
we apply interventions such that $\lambda_{ci}=0$ for all citizens $i$ of country $c$,
but non-zero for other countries.
We then estimate $\delta_c$ by linear 
regression of $q_c$ on $Q_W$ (which is justified by \Cref{eq: ne}). 
Second, we parametrize $\omega:\mathbb{R}^{\sum_c N_c}\to \mathbb{R}^C$ 
by the weights $\phi$ of a neural network and train the network to minimize the loss function
\begin{align*}
    \text{Loss}(\phi) = \sum_j \sum_c \left[ \widehat{q}_c^{\text{NE}}(\phi(\bm{\lambda}^j)) - q_c ^{\text{NE}}(\bm{\lambda}^j) \right]^2,
\end{align*}
where $\bm{\lambda}^j$ is an intervention on the citizens preferences, $q_c^{\text{NE}}(\bm{\lambda}^j)$ 
is the resulting ground-truth low-level NE,
and $\widehat{q}_c^{\text{NE}}(\phi(\bm{\lambda}^j))$ is the
NE derived by \Cref{eq: ne} and
the estimated $\widehat{\delta_c}$ and $\widehat{\alpha_c}$,
 the latter of which is one of the outputs returned by $\omega$. 
 The interventions $\bm{\lambda}^j$ are sampled according to a fixed distribution (see \Cref{app: details of voting experiment} for details).
 Minimizing the consistency loss is the central 
idea of approximate causal abstractions \citep{beckers2020approximate,rischel2021compositional}.

\subsection{Results}

We consider three different voting mechanisms: Vickrey-Clarke-Groves (VCG), 
Median Voting, and Random Dictator. 
Descriptions of the three voting mechanisms and how we compute low-level Nash equilibria for each
are given in \Cref{app: details voting mechanisms}.
We apply our method to a simulation with five countries and a total of 1,000 citizens.
See \Cref{app: details of voting experiment} for details about the simulations and training. 
We report a summary of the results in \Cref{table: mechanism comparison} and 
provide detailed results in \Cref{app: voting experiment results}.

\begin{table}[h]
    \centering
    \begin{tabular}{l|c|c|c}
        Mechanism & Model MAE & Baseline MAE & Improvement \\ \hline
        VCG & 0.062 & 1.254 & 95.0\% \\
        Median & 0.142 & 1.248 & 88.6\% \\
        Random Dictator & 5.077 & 4.645 & -9.3\% \\
    \end{tabular}
    \caption{We are predicting $q_c$ for different voting mechanisms under 
    interventions $\bm{\lambda}$. 
    (MAE) is mean absolute error for different voting mechanisms. The baseline is calculated by constantly predicting the low-level Nash
     equilibrium at $\lambda=0$ (since random dictator is stochastic we use the average 
     low-level NE for this mechanism as the baseline). The errors are summed across all countries.
      VCG and Median voting show low errors (compared to the baseline),
     indicating that the collective agency framework effectively captures how interventions affect outcomes.
      Random Dictator performs worse than the baseline (-9.3\%) and has large absolute errors,
      demonstrating the difficulty of learning an $\omega$ for this mechanism. 
      }
    \label{table: mechanism comparison}
\end{table}

\section{Summary and future work}
\label{sec:conclusion}

In this work we have laid out a mathematical foundation 
of agency and collective agency based on the mathematical formalisms of
 mechanized causal graphs and causal abstraction. 
The framework opens up many interesting questions that we hope can be addressed in future work, including:
1) Investigating how different notions of rationality, such as different decision theories, 
affect the propensity for collective agency;
2) Developing a theory that predicts
 when collective agency will emerge (see \propositionref{prop:no_emergence_nontrivial_agency});
3) Extending inverse game theory or inverse reinforcement
 learning to various levels of abstraction; and
4) Translating our theory of collective agency into a mathematical framework that more naturally accommodates both non-independent mechanisms and causal abstractions, such as \citet{garrabrant2024factored}.
We also hope that our work can help to provide the foundation for further theoretical and empirical work to understand, predict, 
    and control emergent collective
    agents in networks of 
    AI systems.

\section{Acknowledgments}
This work was initiated during FHJ's participation in the Pivotal Research Fellowship.
We thank Casper Lützhøft Christensen, Soroush Ebadian, 
Sukanya Krishna, Francisco Madaleno, Kyle Reynoso, Morgan Simpson, and Riya Tyagi for 
helpful discussions and feedback.

\bibliography{bib_file}

\appendix

\section{Additional Related Work}
\label{app:related_work}

Due to space constraints, our discussion of related work in \Cref{sec:related_work} was rather brief. Here we extend this discussion along several axes.

\paragraph{Philosophical Foundations of Collective Agency.}
The question of when multiple individuals constitute a unified collective agent has long occupied philosophers.
Early foundational work established the importance of collective intentionality -- the capacity for shared intentions and joint commitments -- in grounding group agency \citep{Searle1990,Tuomela2006,Bratman2014}.
Subsequent work has refined these accounts, examining the semantics of collective intentional behavior \citep{Ludwig2007}, distinguishing between different modes of shared intention \citep{Pacherie2013}, and exploring the relationship between collective intentions and team agency \citep{Gold2007}.
\citet{List2011} provide a more functionalist account, arguing that a group can be considered an agent in its own right when it has representational states, motivational states, and a capacity to process these states, even when individual members may lack shared beliefs or desires.
A key challenge across these accounts is the \emph{aggregation problem}: determining when coordinated behavior among individuals reflects genuine collective agency rather than merely the sum of individual actions \citep{Anderson1972,Roth2017}.
This question becomes particularly acute when considering systems that exhibit goal-directed behavior at multiple levels of abstraction, from neural subsystems within individual minds \citep{Minsky1988} to emergent patterns in complex adaptive systems \citep{Mitchell2009}.
It is worth noting that the question of collective \emph{agency} is distinct from, though related to, that of collective \emph{intelligence} \citep{Malone2015}, the latter focusing more on problem-solving capabilities than on unified agency per se.
While these philosophical frameworks provide important conceptual foundations, they generally lack the formal machinery needed to rigorously identify collective agents or predict when they will emerge.

\paragraph{Causal Definitions of Individual Agency.}
Recent work has made important progress on rigorously defining and identifying \emph{individual} agents using tools from causality and decision theory.
Essentially, this is because any notion of goal-directedness -- which is intrinsic to agency -- requires considering what a putative agent would have done differently to achieve their goal, had their circumstances been different.
Early work in AI and multi-agent systems developed formal models of joint action and cooperation \citep{Levesque1990}, though these typically assumed rather than derived the conditions for collective agency.
Most closely related to our work, \citet{kenton2023discovering} propose a causal discovery algorithm for identifying agents from empirical data.
Others define and detect agency based on how predictive the assumption of utility-maximizing behavior is of a system's observed behavior: \citet{Orseau2018} use Bayesian inverse reinforcement learning \citep{Ramachandran2007} to infer the extent to which a given system is a goal-directed agent, whereas \citet{macdermott2024measuring} use a maximal causal entropy model \citep{Ziebart2010} to measure goal-directedness.
More recently, others have attempted to apply such definitions to measure the goal-directedness not just of RL but also LLM-based agents \citep{Xu2024,Everitt2025}.
A related but separate research thread has studied causal definitions of intention \citep{Halpern2018,Ward2024}.
None of these works, however, formally consider the question of \emph{collective} agents, where multiple subsystems have different actions, observations, and incentives.

\paragraph{Multi-Agent Formalisms and Collective Behavior.}
For settings with multiple agents, \citet{hammond2023reasoning} provide a formal framework -- known as causal games -- by generalizing multi-agent influence diagrams \citep{Koller2003} to the higher levels of Pearl's causal hierarchy \citep{Pearl2009a}.%
\footnote{In the same paper, \citet{hammond2023reasoning} also introduce \textit{mechanized} causal games, in which the parameters or decision rules of variables are explicitly represented as mechanism nodes, allowing \citet{kenton2023discovering} to discover which elements correspond to decisions and utilities.}
Other game-theoretic approaches, such as coalition structures in cooperative game theory \citep{Ray2007,Elkind2016}, provide models of when subgroups might form and what outcomes they might achieve, but do not address whether such coalitions constitute unified agents.
Complementing these formal frameworks, a substantial literature examines collective intelligence and emergent behavior in multi-agent systems, from swarm algorithms \citep{Bonabeau1999} to collective decision-making in biological groups \citep{Couzin2009}.
Several works have developed formal methods for detecting or measuring emergent phenomena by comparing micro- and macro-level descriptions \citep{Kubik2003,Seth2006,Szabo2015} or by measuring changes in collective capabilities \citep{Sourbut2024}.
Others have proposed theoretical frameworks for emergent agency drawing on active inference \citep{Friston2022} or evolutionary perspectives \citep{Okasha2018,Smith2020}.
However, these approaches either measure emergence empirically without providing normative criteria for when collective agency exists, or apply to specific domains without offering a general framework for identifying collective agents across contexts.

\paragraph{Causal Abstraction.}
The notion that systems can be accurately described at multiple levels of abstraction has been formalized in the causal modelling literature through the concept of causal abstraction.
\citet{rubenstein2017causal} 
and
\citet{beckers2019abstracting} 
develop frameworks for determining when a high-level causal model is a valid abstraction of a more detailed low-level model, preserving causal relationships even when variables are aggregated or details are omitted.
\citet{beckers2020approximate} extend this to allow for approximate abstractions that sacrifice some precision for greater simplicity.
These frameworks provide rigorous criteria for when a simplified model faithfully represents the causal structure of a more complex system -- precisely the kind of tool needed to determine when viewing a group of agents as a single collective agent is justified.
However, existing causal abstraction work has focused primarily on single-agent causal influence diagrams or on causal models without decision-makers.
Applying these ideas to multi-agent settings, where the low-level model involves multiple decision-makers with potentially conflicting objectives, requires extending the abstraction framework to handle the strategic interactions captured by causal games.

\paragraph{AI Safety and Multi-Agent Risks.}
Our primary motivation is the safety and control of networks of advanced AI agents, and in particular the possibility of emergent `super-agents' with dangerous or unexpected capabilities or goals.
A key safety challenge is that multiple, simpler AI systems might inadvertently form a collective agent with capabilities and objectives distinct from any individual in the group \citep{hammond2025multi}.
Compared to AI tools, the ability of artificial \emph{agents} to make plans and take actions in pursuit of complex goals makes them not only more useful, but also more harmful if those goals are misaligned \citep{chan2023harms}.
For example, competitive pressures may lead individually rational AI agents to rapidly exhaust collective resources \citep{Piatti2024}, or a group of agents might combine their harmless individual capabilities to override safeguards and execute dangerous attacks \citep{Jones2024}.
Several works have examined emergent capabilities in multi-agent AI systems \citep{Chen2009,Teo2013}, though most focus on identifying specific emergent behaviors rather than general conditions for collective agency.
\citet{drexler2019reframing} proposes that a comprehensive set of narrow AI `services' could provide a safer alternative to monolithic superintelligence, though this depends on preventing unwanted collective agents from emerging.
Understanding when and how collective agency arises is therefore critical for anticipating and mitigating multi-agent AI risks.

\section{Battle of the sexes example}\label{app: battle of the sexes}

\begin{example}\textbf{Battle of the sexes.} We illustrate the formalism with the classic 
    two-player coordination game `battle of the sexes', shown in \Cref{table: battle}. 

\begin{table}[h]
    
  \centering
  \caption{Battle of the sexes payoff matrix}
  \begin{tabular}{c|cc}
           & Opera & Football \\ \hline
    Opera  & $(2,1)$ & $(0,0)$ \\
    Football & $(0,0)$ & $(1,2)$ \\
  \end{tabular}
  \label{table: battle}
\end{table}

\begin{figure}[h]
    \centering
    \resizebox{0.4\linewidth}{!}{
    \begin{tikzpicture}[
      x=2.5cm, y=2.0cm, >=Latex,
      every node/.style={font=\Large},
      var/.style   ={circle,draw,fill=white,minimum size=12mm,inner sep=0pt,thick},
      tilde/.style ={rounded corners=2pt,draw,fill=black,text=white,minimum width=12mm,minimum height=12mm,inner sep=0pt,thick},
      edge/.style      ={->,thick},
      grayedge/.style  ={->,thick,draw=gray!60},
      dashededge/.style={->,thick,dashed}
    ]

    \node[tilde] (tD1) at (0,2) {$\widetilde{D}_1$};
    \node[tilde] (tD2) at (2,2) {$\widetilde{D}_2$};

    \node[var] (D1) at (0,1) {$D_1$};
    \node[var] (D2) at (2,1) {$D_2$};
    \node[var] (U1) at (0,0) {$U_1$};
    \node[var] (U2) at (2,0) {$U_2$};

    \node[tilde] (tU1) at (0,-1) {$\widetilde{U}_1$};
    \node[tilde] (tU2) at (2,-1) {$\widetilde{U}_2$};

    \draw[grayedge] (tD1) -- (D1);
    \draw[grayedge] (tD2) -- (D2);
    \draw[grayedge] (tU1) -- (U1);
    \draw[grayedge] (tU2) -- (U2);

    \draw[edge] (D1) -- (U1);
    \draw[edge] (D1) -- (U2);
    \draw[edge] (D2) -- (U1);
    \draw[edge] (D2) -- (U2);

    \draw[dashededge] (tU1) to[bend left=45] (tD1); 
    \draw[dashededge] (tU2) to[bend right=45] (tD2); 
    \draw[dashededge] (tD1) to[bend left=15] (tD2);
    \draw[dashededge] (tD2) to[bend left=15] (tD1);

    \end{tikzpicture}
    }
    \caption{Mechanized causal graph for 2x2 games (of which battle of the sexes is an 
    example). Dashed edges 
    represent edges in the mechanisms model. 
    Solid edges represent edges in the object-level model. 
    Gray edges link mechanism variables with object variables.
    The distribution of $D_1$, which 
    corresponds to the decision of player 1, depends on the conditional distribution of $U_1$ 
    given $D_1$ and $D_2$ and the marginal distribution of $D_2$. 
    This is represented by the dashed edges from $\widetilde{U}_1$ to $\widetilde{D}_1$ and 
    $\widetilde{D}_2$ to $\widetilde{D}_1$, respectively. If player 1 is choosing a 
    strategy using best response rationality (see \definitionref{def: best response rationality})
    with utility equal to the payoff, then player 1 will not respond to changes
    in $\widetilde{U}_2$ if $\widetilde{D}_2$ is kept fixed. Therefore, $\widetilde{D}_1$ does not
    functionally depend on $\widetilde{U}_2$ and we draw no dashed edge from $\widetilde{U}_2$ to $\widetilde{D}_1$.}

    \label{fig: battle}
\end{figure}

The mechanized SCM $\mec$ consists of a deterministic cyclic SCM with signature
\begin{align*}
    \widetilde{D}_1&\in [0,1]\\
    \widetilde{D}_2&\in [0,1] \ \\
    \widetilde{U}_1&\in \mathbb{Z}^{\{O,F\}\times\{O,F\}}\\
    \widetilde{U}_2&\in \mathbb{Z}^{\{O,F\}\times\{O,F\}}.
\end{align*}
$\widetilde{D}_1$ and $\widetilde{D}_2$ specify probability distributions over `opera' and 
`football' (the number in $[0,1]$ represents the probability of picking opera). 
$\widetilde{U}_1$ and $\widetilde{U}_2$ 
$: \{O,F\}\times\{O,F\}
\to\mathbb{Z}$ specify the payoffs
 of the two players' decisions (for example, in the battle of the sexes, $\widetilde{U}_1(O,O)
=2$, see \Cref{table: battle}). 
The parameterized SCM is given by the structural assignments
\begin{align*}
    \mathcal{F}_{D_1}^{\widetilde{D}_1}(\mathcal{E}_{D_1})=\begin{cases}
        O & \mathcal{E}_{D_1}\leq \widetilde{D}_1\\
        F & \text{otherwise}
    \end{cases}\\
    \mathcal{F}_{D_2}^{\widetilde{D}_2}(\mathcal{E}_{D_2})=\begin{cases}
        O & \mathcal{E}_{D_2}\leq \widetilde{D}_2\\
        F & \text{otherwise}
    \end{cases}\\
    \mathcal{F}_{U_1}^{\widetilde{U}_1}(D_1,D_2, \mathcal{E}_{U_1})=\widetilde{U}_1(D_1,D_2)\\
    \mathcal{F}_{U_2}^{\widetilde{U}_2}(D_1,D_2, \mathcal{E}_{U_2})=\widetilde{U}_2(D_1,D_2)
\end{align*}
$\mathcal{E}_{D_1},\mathcal{E}_{D_2}\sim \text{Unif}(0,1)$.
 We assume that the payoffs are
  deterministic in the decisions, 
  and so, the distributions of the 
  noise variables $\mathcal{E}_{U_1},\mathcal{E}_{U_2}$ are irrelevant. 

We now specify the mechanism model $\widetilde{\mathcal{M}}$.
We assume that the payoffs are as specified in \Cref{table: battle}, that is,
\begin{align*}
    \mathcal{F}_{\widetilde{U}_1}(\widetilde{\rV})=\begin{cases}
        (O,O)\mapsto 2\\
        (F,F)\mapsto 1\\
        (O,F)\mapsto 0\\
        (F,O)\mapsto 0
    \end{cases}\\
    \mathcal{F}_{\widetilde{U}_2}(\widetilde{\rV})=\begin{cases}
        (O,O)\mapsto 1\\
        (F,F)\mapsto 2\\
        (O,F)\mapsto 0\\
        (F,O)\mapsto 0.
    \end{cases}\\
\end{align*}
The assignments for the payoff mechanisms constantly return the payoff function specified in \Cref{table: battle} (irrespective of the policies of the players).
We assume that the $\widetilde{D}_1$ and $\widetilde{D}_2$ act as to optimize the expected value of their respective payoffs, that is, 
\begin{align*}
    \mathcal{F}_{\widetilde{D}_1}(\widetilde{D}_2,\widetilde{U}_1,\widetilde{U}_2)\in\underset{\widetilde{d}_1\in [0,1]}{\argmax} \  \mathbb{E}_{\widetilde{U}_1,\widetilde{U}_2,\widetilde{d}_1,\widetilde{D}_2}( U_1 )\\
    \mathcal{F}_{\widetilde{D}_2}(\widetilde{D}_1,\widetilde{U}_1,\widetilde{U}_2)\in\underset{\widetilde{d}_2\in [0,1]}{\argmax} \  \mathbb{E}_{\widetilde{U}_1,\widetilde{U}_2,\widetilde{D}_1,\widetilde{d}_2}(U_2)
\end{align*}    
Notice that $\mathcal{F}_{\widetilde{D}_1}$ depends trivially 
on $\widetilde{U}_2$, and $\mathcal{F}_{\widetilde{D}_2}$ depends trivially 
on $\widetilde{U}_1$. 
It is well known that this system of equations has three solutions $\Sol(\widetilde{\mathcal{M}})=\{\widetilde{s}_1,\widetilde{s}_2,\widetilde{s}_3\}$ where 
\begin{align*}
\proj_{\{\widetilde{D}_1,\widetilde{D}_2\}}(\widetilde{s}_1)&=\{1_{\widetilde{D}_1},1_{\widetilde{D}_2}\}\\
\proj_{\{\widetilde{D}_1,\widetilde{D}_2\}}(\widetilde{s}_2)&=\{0_{\widetilde{D}_1},0_{\widetilde{D}_2}\}\\
\proj_{\{\widetilde{D}_1,\widetilde{D}_2\}}(\widetilde{s}_3)&=\left\{\frac{2}{3}_{\widetilde{D}_1},\frac{1}{3}_{\widetilde{D}_2}\right\},
\end{align*}
representing the three Nash equilibria.
\end{example}

\section{First mover rationality}
\label{app: first mover rationality}
In this section, we define first mover rationality, which corresponds to 
making a decision and then assuming that the other players will adapt their decisions
in response to that. In order to act like a first mover, an agent must have a belief model 
indicating which other nodes are believed to be agents and what their utility functions are believed to be.

\begin{definition}
    Let a mechanized causal graph $\mec$ and mechanism variable 
    $\widetilde{S}\in \widetilde{\rV}$ be given. A \textbf{belief model} is a tuple $(\widetilde{\rA},\bm{\mathcal{U}})$ where
    \begin{enumerate}
        \p $\widetilde{\rA}=(\widetilde{A}_1,\dots, \widetilde{A}_n)$ is a sequence of distinct nodes in $\widetilde{\rV}\backslash \{\widetilde{S}\}$.
         Intuitively, it corresponds to the set of nodes believed to be agents by $\widetilde{S}$.  
        \p $\bm{\mathcal{U}}=(\mathcal{U}_1,\dots, \mathcal{U}_n)$ is a sequence of utility functions.
        Intuitively, $\mathcal{U}_i$ is the utility believed by $\widetilde{S}$ to belong to 
        agent $A_i$. 
        \end{enumerate}
\end{definition}

Given a belief model, we can define first mover rationality as 
the decision that maximizes the expected utility given that other agents 
(according to the belief model) employ best response rationality.

\begin{definition}
We define \textbf{first mover rationality} $\mathcal{R}^{\text{FM}}_{\widetilde{S}}$ for $\widetilde{S}\in \widetilde{\rV}$ under 
belief model $(\widetilde{\rA},\bm{\mathcal{U}})$ and utility $\mathcal{U}$ by the condition that,
for $\widetilde{\rc}\in \dom (\widetilde{\rV}\backslash\{\widetilde{S}\})$,
\begin{align*}
    \widetilde{s}\in \mathcal{R}^{\text{FM}}_{\widetilde{S}}(\widetilde{\rc}) 
    \text{ if and only if }
    \widetilde{s} \in \left\{\proj_{\widetilde{S}}(\widetilde{\rv})\mid 
     \widetilde{\rv}\in \underset{\widetilde{\rw}\in \rR}{\argmax} \ \mathbb{E}_{\widetilde{\rw}}(\mathcal{U}(\rV)) \right\},
\end{align*}
where $\rR$ are those $\widetilde{\rv}$ of $\dom(\widetilde{\rV})$ for which
\begin{enumerate}
    \p $\proj_{\widetilde{\rV}\backslash \widetilde{\rA}}(\widetilde{\rv})=\proj_{\widetilde{\rV}\backslash \widetilde{\rA}}(\widetilde{\rc})$. 
    That is, non-agents do not respond by altering their distributions.
    \p For all $\widetilde{A}_i\in \widetilde{\rA}$, $\proj_{\widetilde{A}_i}(\widetilde{\rv})\in \mathcal{R}^{\text{BR}}_{\widetilde{A}_i;\mathcal{U}_i}(\proj_{\widetilde{\rV}\backslash \{\widetilde{A}_i\}}(\widetilde{\rv}))$, 
    where $\mathcal{R}^{\text{BR}}_{\widetilde{A}_i;\mathcal{U}_i}$ is the best response rationality relation for $\widetilde{A}_i$ 
    with utility $\mathcal{U}_i$.
    That is, other agents are assumed to respond $\mathcal{R}^{\text{BR}}$-rationally to their own utility functions.
\end{enumerate}
\end{definition}

In the following example we show that whether or not we have collective agency depends 
on the rationality principle employed by the agents. Concretely, we show
an example where best response rationality does not induce collective agency, whereas first mover
rationality does.

\begin{example}\textbf{Two agents with shared utility.}

    Consider a situation where we have two $\mathcal{R}^{\text{BR}}$-agents optimizing a shared utility function. 
    Signature given by 
    \begin{align*}
        \widetilde{D}_1&\in \{0,1\}\\
        \widetilde{D}_2&\in \{0,1\}\\
        \widetilde{U}&\in \mathbb{R}^{\{0,1\}\times\{0,1\}},
    \end{align*}
    and object variables
    \begin{align*}
        \mathcal{F}_{D_1}^{\widetilde{d}_1}(\mathcal{E}_{D_1})&=\widetilde{d}_1\\
        \mathcal{F}_{D_2}^{\widetilde{d}_2}(\mathcal{E}_{D_2})&=\widetilde{d}_2\\
        \mathcal{F}_{U}^{\widetilde{u}}(D_1,D_2,\mathcal{E}_U)&=\widetilde{u}(D_1,D_2).
    \end{align*}
    
    Assume that $\widetilde{D}_1$ and $\widetilde{D}_2$ are both $\mathcal{R}^{\text{BR}}$-rational with 
    respect to utility $\mathcal{U}(\rv)=\proj_U(\rv)$, that is,
    \begin{align*}
    \mathcal{F}_{\widetilde{D}_1}(\widetilde{D}_2,\widetilde{U})\in\underset{\widetilde{d}_1\in \{0,1\}}{\argmax} \  \mathbb{E}_{\widetilde{U},\widetilde{d}_1,\widetilde{D}_2}(U)\\
        \mathcal{F}_{\widetilde{D}_2}(\widetilde{D}_1,\widetilde{U})\in\underset{\widetilde{d}_2\in \{0,1\}}{\argmax} \  \mathbb{E}_{\widetilde{U},\widetilde{D}_1,\widetilde{d}_2}(U).
    \end{align*}
    Since both agents are optimizing the same utility function, we may wonder 
    if we can abstract this model into a collective agent, that is, a model with a single
    ($\mathcal{R}^{\text{BR}}$, $\mathcal{U}$) agent. 
    Consider high-level $\mec^*$ given by mechanism signature 
    \begin{align*}
        \widetilde{D}^*\in \{0,1\}^2\\
        \widetilde{U}^*\in \mathbb{R}^{\{0,1\}\times\{0,1\}}
    \end{align*}
    and object variables
    \begin{align*}
        \mathcal{F}_{D^*}^{\widetilde{d}^*}(\mathcal{E}_{D^*})&=\widetilde{d}^*\\
        \mathcal{F}_{U^*}^{\widetilde{u}^*}(D^*,\mathcal{E}_{U^*})&=\widetilde{u}^*(D^*).
    \end{align*}
    Consider the value mappings 
    \begin{align*}
        \tau_{D^*}(\{d_1,d_2\})&=(d_1,d_2)\\
        \tau_{U^*}(u)&=u
    \end{align*}
    and intervention mappings given by
    \begin{align*}
        \omega_{\widetilde{D}^*}(\{\widetilde{d}_1,\widetilde{d}_2\})&=(\widetilde{d}_1,\widetilde{d}_2)\\
        \omega_{\widetilde{U}^*}(\widetilde{u})&=\widetilde{u}.
    \end{align*}
    Assume that $\widetilde{D}^*$ is $\mathcal{R}^{BR}$-rational with respect to utility $U^*$ 
    that is, $\mathcal{U}(\rv^*)=\proj_{U^*}(\rv^*)$. 
    Consider the following mechanism for $U$:
    $$\widetilde{u}=(d_1,d_2)\mapsto \mathbbm{1}(d_1=d_2=0)+2\mathbbm{1}(d_1=d_2=1)$$
    We can convince ourselves that $\mec^*$ is not a mechanized abstraction of 
    $\mec$ since 
    $\Sol(\widetilde{\mathcal{M}};\widetilde{u})=\{\{0_{\widetilde{D}_1},0_{\widetilde{D}_2},\widetilde{u}\},\{1_{\widetilde{D}_1},1_{\widetilde{D}_2},\widetilde{u}\}\}$
     and $\Sol(\widetilde{\mathcal{M}}^*;\omega(u))=\{\{(1,1)_{\widetilde{D}^*},\widetilde{u}\}\}$, 
    have different cardinalities. 
    
    If, on the other hand $\widetilde{D}_1$ and $\widetilde{D}_2$ had employed first mover rationality with accurate belief models, then
    \begin{align*}
    \mathcal{F}_{\widetilde{D}_1}(\widetilde{D}_2,\widetilde{U})\in\underset{\widetilde{d}_1\in \{0,1\}}{\argmax}  \ \underset{\widetilde{d}_2\in \{0,1\}}{\max} \ \mathbb{E}_{\widetilde{U},\widetilde{d}_1,\widetilde{d}_2}(U)\\
        \mathcal{F}_{\widetilde{D}_2}(\widetilde{D}_1,\widetilde{U})\in\underset{\widetilde{d}_2\in \{0,1\}}{\argmax}  \ \underset{\widetilde{d}_1\in \{0,1\}}{\max} \ \mathbb{E}_{\widetilde{U},\widetilde{d}_1,\widetilde{d}_2}(U),
    \end{align*}
    and $\mec^*$ would be a strong mechanized abstraction of $\mec$, 
    suggesting that we can view $\widetilde{D}_1$ and $\widetilde{D}_2$ as a 
    collective $\mathcal{R}^{\text{BR}}$-agent.  
    \end{example}

\section{Formal description of the actor-critic example (\Cref{ex: actor-critic introduction} and 
\Cref{sec: actor-critic})}
\label{app: actor-critic details}

Formally, consider the signature\footnote{Rather than having object nodes $W$ and $Y$, representing the utilities of critic 
    and actor, respectively, we could have represented the utilities as 
    external quantities computed from the object nodes, cf. \definitionref{def: utility}. Here, we 
    choose to represent the utilities as object nodes to closely follow the 
    presentation in \citet{kenton2023discovering}.}

    \begin{align*}
    \begin{aligned}
        \widetilde{R}&\in [0,1]^2 &&  \text{Probability of reward signal (+1) given state 0 or 1.}\\
        \widetilde{W}& \in \{1\} && \text{Utility of the critic $W:=-(R-Y)^2$}\\
         \widetilde{Y}&\in \{1\} && \text{Utility of the actor $Y:=Q(A)$ }\\
        \widetilde{S}&\in [0,1]^2 && \text{Probability of state 1 given action 0 or 1.}\\
        \widetilde{Q}&\in[0,1]^2 && \text{Critic }(Q:=\widetilde{Q})\\
        \widetilde{A}&\in \{0,1\} && \text{Action ($A:=\widetilde{A}$)}
        \\
    \end{aligned}
    \end{align*}
    And structural assignments encoding that the actor and critic 
    are playing best response to $Y$ and $W$, respectively:
    \begin{align*}
        \widetilde{R}&:=(r_0,r_1)\\
        \widetilde{W}&:=1 \\
        \widetilde{Y}&:=1 \\
        \widetilde{S}&:=(s_0,s_1)\\
        \widetilde{Q}&:=
            \left(\widetilde{R}[0](1-\widetilde{S}[0])+ \widetilde{R}[1]\widetilde{S}[0],
            \widetilde{R}[0](1-\widetilde{S}[1])+ \widetilde{R}[1]\widetilde{S}[1] \right) \\
        \widetilde{A}&:=
            \mathbbm{1}(\widetilde{Q}[0]\leq \widetilde{Q}[1]).
    \end{align*}
    Since the mechanism graph is acyclic, 
    there is only one solution $\widetilde{\rs}$ to this system of equations.
    Let us consider the following abstraction of the system:
    \begin{align*}
    \begin{aligned}
        \widetilde{R}^*&\in [0,1]^2\  &&\text{Probability of reward signal (+1) given state 0 or 1}\\
        \widetilde{S}^*&\in [0,1]^2 &&\text{Probability of state 1 given action}\\
        \widetilde{A}^*&\in \{0,1\} &&\text{Action ($A:=\widetilde{A}^*$)},
    \end{aligned}
    \end{align*}
    with structural assignments
    \begin{align*}
        \widetilde{R}^*&:=(r_0,r_1)\\
        \widetilde{S}^*&:=(s_0,s_1)\\
        \widetilde{A}^*&:=\mathbbm{1}\left(
            \widetilde{S}^*[0]\widetilde{R}^*[1]+
            (1-\widetilde{S}^*[0])\widetilde{R}^*[0]\leq 
            \widetilde{S}^*[1]\widetilde{R}^*[1]+
            (1-\widetilde{S}^*[1])\widetilde{R}^*[0]\right).
    \end{align*}
    To tie the models together, we define the value mappings and intervention mappings
    \begin{align*}
        \tau_{A^*}(a)&=a & \omega_{\widetilde{A}^*}(\widetilde{a})&=\widetilde{a}\\
        \tau_{S^*}(s)&=s & \omega_{\widetilde{S}^*}(\widetilde{s})&=\widetilde{s}\\
        \tau_{R^*}(r)&=r & \omega_{\widetilde{R}^*}(\widetilde{r})&=\widetilde{r}.
    \end{align*}

\section{Proof of \propositionref{prop:no_emergence_nontrivial_agency}}
\label{app:proof_no_emergence_nontrivial_agency}

\textbf{Proposition \ref{prop:no_emergence_nontrivial_agency}.} \textit{\Paste{prop:no_emergence_nontrivial_agency}}

\begin{proof}
    Let $\widetilde{\rc}^*_1,\widetilde{\rc}^*_2\in \dom(\widetilde{\rV}^*\backslash\{\widetilde{S}^*\})$ be
    arbitrary settings. 
    Since $\mec^*$ is a strong abstraction of $\mec$, there exists low-level interventions
    $\widetilde{\ry}_1,\widetilde{\ry}_2\in \dom(\part_{\widetilde{\rV}^*\backslash \{\widetilde{S}^*\}})$
    such that $\omega_{\widetilde{\rV}^*\backslash \{\widetilde{S}^*\}}(\widetilde{\ry}_1)=\widetilde{\rc}^*_1$ and 
    $\omega_{\widetilde{\rV}^*\backslash \{\widetilde{S}^*\}}(\widetilde{\ry}_2)=\widetilde{\rc}^*_2$.  
    Since the mechanism nodes in $\part_{\widetilde{S}^*}$ have independent mechanisms,  
    $\mathbb{P}_{\Sol(\widetilde{\mathcal{M}};\widetilde{\ry}_2)}(\part_{S^*}\mid \part_{\PA_{S^*}})=
    \mathbb{P}_{\Sol(\widetilde{\mathcal{M}};\widetilde{\ry}_1)}(\part_{S^*}\mid \part_{\PA_{S^*}})$, 
    which implies that
     $\mathbb{P}_{\Sol(\widetilde{\mathcal{M}}; \widetilde{\ry}_1)}(\tau_{S^*}(\part_{S^*})\mid \tau_{\PA_{S^*}}(\part_{\PA_{S^*}}))=
     \mathbb{P}_{\Sol(\widetilde{\mathcal{M}};\widetilde{\ry}_2)}(\tau_{S^*}(\part_{S^*})\mid \tau_{\PA_{S^*}}(\part_{\PA_{S^*}}))$ by
     injectivity of $\tau_{\PA_{S^*}}$. Causal consistency 
    (\definitionref{def:constructive_abstraction}) now implies that 
    \begin{align*}
        \mathbb{P}_{\widetilde{\rc}^*_1,\mathcal{F}_{\widetilde{S}^*}(\widetilde{\rc}^*_1)}(S^*\mid \PA_{S^*})
        &= \mathbb{P}_{\Sol(\widetilde{\mathcal{M}}^*;\widetilde{\rc}^*_1)}(S^*\mid \PA_{S^*})\\
        &= \mathbb{P}_{\Sol(\widetilde{\mathcal{M}}; \widetilde{\ry}_1)}(\tau_{S^*}(\part_{S^*})\mid \tau_{\PA_{S^*}}(\part_{\PA_{S^*}}))\\
        &= \mathbb{P}_{\Sol(\widetilde{\mathcal{M}};\widetilde{\ry}_2)}(\tau_{S^*}(\part_{S^*})\mid \tau_{\PA_{S^*}}(\part_{\PA_{S^*}}))\\
        &= \mathbb{P}_{\Sol(\widetilde{\mathcal{M}}^*;\widetilde{\rc}^*_2)}(S^*\mid \PA_{S^*})\\
        &=\mathbb{P}_{\widetilde{\rc}^*_2,\mathcal{F}_{\widetilde{S}^*}(\widetilde{\rc}^*_2)}(S^*\mid \PA_{S^*})
    \end{align*}
    Since $\widetilde{\rc}^*_1,\widetilde{\rc}^*_2\in \dom(\widetilde{\rV}^*\backslash\{\widetilde{S}^*\})$ were arbitrary, 
    we have that $\widetilde{S}^*$ is not a non-trivial agent.
\end{proof}

\section{Proof of \propositionref{prop:actor_critic_abstraction}}
\label{app:proof_actor_critic}

\textbf{Proposition \ref{prop:actor_critic_abstraction}.} \textit{\Paste{prop:actor_critic_abstraction}}

\begin{proof}

To show that $\mec^*$ is a strong
 mechanized abstraction, we need to show that
  (1) consistency holds for all low-level interventions where $\omega$ is defined,
 and (2) $\omega_{\widetilde{V}^*}$ is surjective onto 
 $\dom(\widetilde{V}^*)$ for all $\widetilde{V}^*\in \widetilde{\rV}^*$.

First, consider surjectivity.
Since the intervention mappings are 
the identity functions and the ranges of the relevant low-level and high-level variables
 are identical,
 we have surjectivity.

Second, we check consistency.
Let $\widetilde{\ry} = \{(s_0,s_1)_{\widetilde{S}},(r_0,r_1)_{\widetilde{R}}\}$ 
be an arbitrary low-level intervention 
on $\part_{\widetilde{S}^*}\cup \part_{\widetilde{R}^*}=\{\widetilde{R},\widetilde{S}\}$. 
Then $\omega(\widetilde{\ry}) = \{(s_0,s_1)_{\widetilde{S}^*},(r_0,r_1)_{\widetilde{R}^*}\}$ 
and
\begin{align*}
    \Sol(\widetilde{\mathcal{M}}^*; \omega(\widetilde{\ry})) = 
    \left\{\left\{(s_0,s_1)_{\widetilde{S}^*},(r_0,r_1)_{\widetilde{R}^*},\mathbbm{1}\left(s_0r_1+(1-s_0)r_0\leq
        s_1r_1+(1-s_1)r_0\right)_{\widetilde{A}^*}\right\}\right\}
\end{align*}
In the low-level model $\mec$:
\begin{align*}
    \Sol(\widetilde{\mathcal{M}}; \widetilde{\ry})=
    \Big\{\Big\{&(r_0,r_1)_{\widetilde{R}}, 1_{\widetilde{Y}}, 1_{\widetilde{W}}, (s_0,s_1)_{\widetilde{S}},\\
    &\left(r_0(1-s_0)+ r_1s_0, r_0(1-s_1)+ r_1s_1 \right)_{\widetilde{Q}},\\
    &\mathbbm{1}\left(s_0r_1+(1-s_0)r_0\leq s_1r_1+(1-s_1)r_0\right)_{\widetilde{A}} \Big\}\Big\}
\end{align*}

Since $A^*$ and $A$ are equal to 0 and 1 for the same values of $r_0,r_1,s_0,s_1$, 
we have that 
$$\mathbb{P}_{\Sol(\widetilde{\mathcal{M}};\widetilde{\ry})}(\tau(\rV))=\mathbb{P}_{\Sol(\widetilde{\mathcal{M}}^*; \omega(\widetilde{\ry}))}(\rV^*)$$

A similar calculation shows consistency for interventions on other
  elements in $\{\bigcup_{V^*\in \widetilde{\rY}^*} 
  \part_{\widetilde{V}^*}\mid \widetilde{\rY}^*\subseteq \widetilde{\rV}^*\}$.

We want to argue that in the abstracted model, $\widetilde{A}^*$ is $\mathcal{R}^{\text{BR}}$ rational 
under utility function $\mathcal{U}(\rv^*)
=\proj_{R^*}(\rv^*)$. So we must argue that $\widetilde{A}^*$ responds $\mathcal{R}^{\text{BR}}$-rationally to $\mathcal{U}$.
 Consider context $\widetilde{\rc}=\{(s_0,s_1)_{\widetilde{S}^*},(r_0,r_1)_{\widetilde{R}^*}\}$
  and the 
 responses $0_{\widetilde{A}^*}$ or $1_{\widetilde{A}^*}$:
\begin{align*}
    \mathbb{E}_{\{0_{\widetilde{A}^*},(s_0,s_1)_{\widetilde{S}^*},(r_0,r_1)_{\widetilde{R}^*}\}}(R^*)&=s_0r_1+(1-s_0)r_0\\
    \mathbb{E}_{\{1_{\widetilde{A}^*},(s_0,s_1)_{\widetilde{S}^*},(r_0,r_1)_{\widetilde{R}^*}\}}(R^*)&=s_1r_1+(1-s_1)r_0
\end{align*}
So 
\begin{align*}
    \mathcal{R}^{\text{BR}}_{\widetilde{A}^*}(\widetilde{\rc})=\begin{cases}
        \{1_{\widetilde{A}^*}\} & s_1r_1+(1-s_1)r_0>s_0r_1+(1-s_0)r_0\\
        \{0_{\widetilde{A}^*}\} & s_1r_1+(1-s_1)r_0<s_0r_1+(1-s_0)r_0\\
        \{0_{\widetilde{A}^*},1_{\widetilde{A}^*}\} & s_1r_1+(1-s_1)r_0=s_0r_1+(1-s_0)r_0,
    \end{cases}
\end{align*}
and we conclude that
 $\mathcal{F}_{\widetilde{A}^*}(\widetilde{\rc})\in \mathcal{R}^{\text{BR}}_{\widetilde{A}^*}(\widetilde{\rc})$ 
 for every $\widetilde{\rc}\in \dom(\widetilde{\rV}^*\backslash\{\widetilde{A}^*\})$.
\end{proof}

\section{Further details on \Cref{sec:surrogate_modeling}}

The code to reproduce all results is available at \href{https://github.com/FrederikHJ/collective_agency_clear/}{this link}. It runs within a few minutes on a standard laptop.

\subsection{Voting mechanisms}\label{app: details voting mechanisms}

We consider three different voting mechanisms: Vickrey-Clarke-Groves (VCG), Median Voting, and Random Dictator.

\paragraph{VCG Voting.}
Under VCG voting, each citizen reports the parameters 
of their utility function and monetary transfers 
ensure that citizens are incentivized to report their true parameters
\citep{tadelis2013game}: $\hat{a}_{ci}=(a_{ci}-\lambda_{ci}),\hat{b}_{ci}=b_{ci},\hat{d}_{ci}=d_{ci}$. 
The mechanism then chooses the social optimum within each country, i.e.
$$q_c^{\text{NE}}(\bm{\lambda})=\underset{q_c\in \mathbb{R}}{\argmax} \sum_i \left(\hat{a}_{ci}q_c-\hat{b}_{ci}q_c^2-\hat{d}_{ci}Q_W^2\right).$$
While this 
voting mechanism is unrealistic,
it serves as a simple baseline example: since this sum has the same form as \Cref{eq: high level utility}, we can calculate the low-level NE analytically using \Cref{eq: ne}.
This also means that, for the VCG mechanism, we know that the high-level model is an exact mechanized abstraction of the low-level model, and we can compare the estimated parameters to ground-truth parameters.

\paragraph{Median Voting.}
For Median Voting, we cannot calculate the low-level NE analytically.
Therefore, we approximate it using iterative best response.
In particular, we initialize all pollution levels to 0 and then iteratively
update the response of each country using the median vote within that country (using a damping factor of 0.3).
We do this until convergence (tolerance of 1e-6 in the 2-norm).

\paragraph{Random Dictator.}
For Random Dictator voting, we randomly select a dictator from each country and 
use the dictators' utility functions to derive the low-level Nash equilibrium.

\subsection{Further details on experiments}\label{app: details of voting experiment}

\textbf{Population structure.}
We consider a population of 5 countries and 1000 citizens in total. The sizes are sampled from a log-normal distribution and scaled to sum to 1000. 
The countries and citizens are fixed throughout the experiments. 

\textbf{Citizen parameters.} 
The citizen parameters are sampled i.i.d. uniformly within the ranges 
\begin{align*}
    a_{ci}\in [0.35,0.65],\\
    b_{ci}\in [7, 13]/N_c,\\
    d_{ci}\in [0.05, 0.15]/C,
\end{align*}
where $N_c$ is the size of country $c$, and $C$ is the number of countries. The parameter ranges
are manually chosen to be somewhat reasonable.

\textbf{Interventions $\lambda$.}
We sample different intervention means for each country. The means are sampled uniformly within the 
range $[0.0, 0.1]$. Given a mean $\lambda_c$, we sample interventions on individual 
citizens from the beta distribution with mean $10\lambda_c$ 
and concentration parameter 10. We then scale the sampled values back to the range $[0.0, 0.1]$. 

\textbf{$\delta_c$ estimation.}
For each country $c$, we sample 10 interventions where 
$(\lambda_{ci})_c=\bm{0}$, that is, no intervention on country $c$.
We estimate $\delta_c$ by linear regression of $q_c$ on $Q_W$ (which is justified by \Cref{eq: ne}).
For random dictator voting, instead of estimating $\delta_c$, we plug in the sum 
$\frac{1}{N_c}\sum_{i} \frac{d_{ci}}{b_{ci}}$ as an estimate (the mean of citizen-level $\delta_{ci}=d_{ci}/b_{ci}$).
We do this to give the model a better chance of learning a useful $\omega$. 
Estimating $\delta_c$ using the regression methods introduces bias since there is positive
correlation between $q_c$ and $Q_W$ introduced by the stochasticity of picking the dictator. 
Alternatively, we could estimate $\delta_c$ using several regressions, each with a fixed dictator,
and then take an average.

\textbf{Surrogate model architecture.}
We use a fully connected neural network with 4 hidden layers of 128, 256, 256, and 128 neurons respectively.
We use the ReLU activation function for all layers.
The input is the intervention vector $(\lambda_{ci})_{ci}$ for all citizens
and the output is $\alpha_c$ for each country.

\textbf{Training procedure.}
We train the model on 1,000 intervention samples for 100 epochs using
the Adam optimizer with a learning rate of $10^{-3}$ and a batch size of 32. 
The results are evaluated on 500 held-out test interventions.

\subsection{Further details on results}\label{app: voting experiment results}
\phantom{dd}\\
\begin{table}[h!]
    \centering
    \begin{tabular}{c|c|c|c|c|c}
        Country & Citizens & MAE($\delta_c$) & MAE($\alpha_c$) & MAE($q_c$) & Baseline MAE($q_c$) \\ \hline
        0 & 180 & 0.000 & 0.025 & 0.011 & 0.233 \\
        1 & 107 & 0.000 & 0.029 & 0.007 & 0.159 \\
        2 & 202 & 0.000 & 0.065 & 0.017 & 0.256 \\
        3 & 412 & 0.000 & 0.107 & 0.022 & 0.468 \\
        4 & 99 & 0.000 & 0.025 & 0.005 & 0.138 \\
    \end{tabular}
    \caption{Results for VCG mechanism. We report the Mean Absolute Error (MAE) 
    for the parameters $\delta_c$ and $\alpha_c$, and for the pollution outcome $q_c$.
		We also include the baseline MAE for $q_c$, which is the MAE if 
        we constantly predict the ground truth low-level Nash equilibrium at $\bm{\lambda = 0}$.
		The model is $\approx$ 95\% better than the baseline, see \Cref{table: mechanism comparison}.
		For VCG we can recover
         $\delta_c$ perfectly using \Cref{eq: ne} and two 
         distinct $(q_c^{\text{NE}},Q_W^{\text{NE}})$ pairs.}
    \label{table: vcg results}
\end{table}

\begin{table}[h]
    \centering
    \begin{tabular}{c|c|c|c}
        Country & Citizens & MAE($q_c$) & Baseline MAE($q_c$) \\ \hline
        0 & 180 & 0.029 & 0.235 \\
        1 & 107 & 0.032 & 0.144 \\
        2 & 202 & 0.025 & 0.259 \\
        3 & 412 & 0.035 & 0.479 \\
        4 & 99 & 0.021 & 0.131 \\
    \end{tabular}
    \caption{Results for Median Voting mechanism.
     In contrast to VCG, we have no ground truth parameters $\alpha_c$ and $\delta_c$ 
     to compare with. However, the surrogate model still accurately predicts the pollution outcome
      $q_c$,
       suggesting that a collective agency framework can effectively model median voting mechanisms.}
    \label{table: median results}
\end{table}

\begin{table}[h]
    \centering
    \begin{tabular}{c|c|c|c}
        Country & Citizens & MAE($q_c$) & Baseline MAE($q_c$) \\ \hline
        0 & 180 & 0.949 & 0.878 \\
        1 & 107 & 0.530 & 0.525 \\
        2 & 202 & 1.163 & 1.042 \\
        3 & 412 & 1.916 & 1.694 \\
        4 & 99 & 0.519 & 0.506 \\
    \end{tabular}
    \caption{Results for Random Dictator mechanism. 
    We use $\delta_c=\frac{1}{N_c}\sum_i \frac{d_{ci}}{b_{ci}}$. 
    The baseline is the average low-level random dictator Nash equilibrium at $\lambda=0$ (averaged over dictator draws).
    The model performs worse than the baseline overall (-9.3\%), see \Cref{table: mechanism comparison}.}
    \label{table: random dictator results}
\end{table}

\section{Applications to MARL and LLM Agents}
\label{app:applications}

In this appendix, we discuss how our theoretical framework for collective agency 
may be applied to multi-agent reinforcement learning (MARL) and large language 
model (LLM) agents, and how it connects to AI safety. While a full treatment of 
these applications is beyond the scope of this work, we hope that the discussion 
below provides useful starting points for future research.

\subsection{Applications to MARL}

There are several subtleties that arise when attempting to translate insights from our framework in to the MARL setting. Here we provide a set of initial suggestions and observations that we hope makes the practical use of our formalism to MARL settings more straightforward. Such settings may provide a natural testbed for experimentation prior to developing real-world applications.

\paragraph{Approximate Equilibria and Bounded Rationality.}
A key feature of our framework is that it is agnostic to the specific notion 
of rationality employed by the agents. The rationality relation $\mathcal{R}$ 
(see \definitionref{def: best response rationality} and 
\Cref{app: first mover rationality}) can encode a wide range of behavioral 
assumptions: exact best response, $\varepsilon$-best response, gradient-based 
learning rules, or any other decision procedure. This flexibility means that 
the framework applies even when agents are only approximately rational, as is 
typical in MARL, where agents may use gradient-based policy optimization and 
converge only to approximate equilibria. In such settings, one could define a 
rationality relation that captures $\varepsilon$-optimality or local optimality, 
and then ask whether a collective agent model under this relaxed rationality 
relation constitutes a valid mechanized abstraction of the low-level system.

\paragraph{Learning Dynamics and Non-Stationarity.}
The mechanized causal games 
studied in this paper can be thought of as modeling
 equilibria: the solutions $\Sol(\widetilde{\mathcal{M}})$ 
 correspond to settings of the mechanism variables that 
 are mutually consistent given the rationality relations.
  On the one hand, this is useful because our framework applies regardless 
  of the algorithm that agents use to reach an equilibrium (whether they use 
  gradient-based learning, linear programming, or some other method). 
  On the other hand, the models in this paper do not naturally support 
  analysis of the learning dynamics themselves, or the non-stationarity 
  that arises when agents are simultaneously updating their strategies. 
  In particular, MARL algorithms can produce cycles in strategy space 
  \citep{Zinkevich2005}, which are not captured by equilibrium-based models. 
  Understanding the relationship between causal models and dynamical systems 
  more generally is an active area of research; see, e.g., 
  \citet{Reisach2025} for a discussion of complications and ambiguities 
  related to the relationship between causal models and time, and 
  \citet{jorgensen2025causal} for ambiguities related to interventions.

\paragraph{Empirical Game-Theoretic Analysis.}
One promising avenue for applying our framework to large or complex MARL 
settings is through \emph{empirical game-theoretic analysis} 
\citep{Wellman2025}. In this approach, overall policies (which might be 
large neural networks in practice) are modeled as pure strategies in a 
`meta-game', which is then iteratively expanded and refined. By constructing 
a mechanized causal game at the level of the meta-game, it becomes possible 
to tractably analyze complex MARL settings using the tools from game theory 
and causal abstraction that we develop in this paper.

\subsection{Applications to LLM-Based Multi-Agent Systems}

Multi-agent systems composed of LLM-based agents are becoming increasingly 
common, from debate protocols \citep{Irving2018,Du2024,Li2025} 
to multi-agent problem-solving and planning systems.
Our framework provides a principled way to ask: when does a group of LLM 
agents, each with its own prompt, context, and objectives, act as if it 
were a single collective agent pursuing a coherent goal?

\paragraph{Collusion in Debate and AI Oversight.}
One possible application is to debate-based AI safety protocols 
\citep{Irving2018}. The effectiveness of AI safety via debate relies on the 
debating agents being adversarial rather than collectively pursuing a shared 
goal. If the debating agents were to collude -- effectively forming a 
collective agent that optimizes for some objective other than truthful 
argumentation -- the safety guarantees of the debate protocol could break 
down. 
Similarly, in other settings where safety arises from some kind of automated oversight or adversarial relationship, cooperation is often undesirable.
Our framework allows one to rule out collective agency under 
certain conditions (\propositionref{prop:no_emergence_nontrivial_agency} 
provides a simple example), which could be used to provide assurance 
that mechanisms like debate are not illegibly pursuing unintended goals.

\paragraph{Emergent Goals in Networks of LLM Agents.}
More generally, whenever multiple LLM agents interact -- whether through 
explicit communication channels, shared memory, or indirect coordination 
via a shared environment -- our framework can be used to assess whether 
emergent collective agency is present and, if so, what objective the 
collective agent appears to be optimizing.
Recent work on measuring goal-directedness in individual LLM agents 
\citep{Xu2024,Everitt2025} provides complementary tools that could be 
combined with our collective agency framework to detect and characterize 
emergent group-level goals in multi-agent LLM systems.

\end{document}